\def\paperTitle{Muon Is Theoretically Wrong for Convolutions, but Empirically Effective}

\def\authorBlock{
\hspace{0cm}
\textbf{Thibaut Boissin}$^{1,2}$
\hspace{1mm}
\textbf{Thomas Massena}$^{2,3}$
\hspace{1mm}
\textbf{Mathieu Serrurier}$^{2}$
\vspace{1mm}
\textbf{Franck Mamalet}$^{1}$
\hspace{3mm}
\\
$^1$ Institut de Recherche Technologique Saint-Exupery, France\\
$^2$ IRIT, France\\
$^3$ Innovation \& Research Division, SNCF, France\\
}

\newif\ifreview 

\newif\ifarxiv \newcommand{\arxiv}{\arxivtrue}
\newcommand{\preprint}{\arxiv}
\newif\ifcamera 

\newif\ifrebuttal

\preprint

\documentclass{article} 

\usepackage{iclr2027_conference,times}
\ifreview
\else
  \ifarxiv
    \iclrfinalcopy
  \else
    \ifcamera
        \iclrfinalcopy
    \else
      \ifrebuttal
        not defined
      \else
        not defined
      \fi
    \fi
  \fi
\fi

\usepackage{graphicx}	
\usepackage{amsmath}	
\usepackage{amssymb}	
\usepackage{booktabs}
\usepackage{times}
\usepackage{microtype}
\usepackage{epsfig}
\usepackage{caption}
\usepackage{float}
\usepackage{placeins}
\usepackage{color, colortbl}
\usepackage{stfloats}
\usepackage{enumitem}
\usepackage{tabularx}
\usepackage{xstring}
\usepackage{multirow}
\usepackage{xspace}
\usepackage{url}
\usepackage{subcaption}
\usepackage{xcolor}
\usepackage[hang,flushmargin]{footmisc}
\usepackage{makecell}

\usepackage{amsthm}
\usepackage{thmtools,thm-restate}
\usepackage{amsmath}
\usepackage{hyperref}
\usepackage{cleveref}
\usepackage{amsfonts}
\usepackage{amssymb}
\usepackage{dsfont}
\usepackage{mathtools}
\usepackage{algorithm}
\usepackage{algpseudocode}

\theoremstyle{plain}
\newtheorem{theorem}{Theorem}[section]
\newtheorem{proposition}[theorem]{Proposition}

\newtheorem{corollary}[theorem]{Corollary}
\theoremstyle{definition}

\theoremstyle{remark}
\newtheorem{remark}[theorem]{Remark}

\newcommand{\stief}{\mathrm{St}}
\newcommand{\convPF}{\operatorname{ConvPF}}

\ifarxiv  \fi

\newcommand{\R}[1]{{%
    \textbf{%
        \ifstrequal{#1}{1}{\textcolor{red}{R#1}}{%
        \ifstrequal{#1}{2}{\textcolor{blue}{R#1}}{%
        \ifstrequal{#1}{3}{\textcolor{magenta}{R#1}}{%
        \ifstrequal{#1}{4}{\textcolor{teal}{R#1}}{%
                           \textcolor{cyan}{R#1}%
        }}}}%
    }%
}}

\newif\ifshowcomments
\showcommentstrue 

\ifshowcomments

\newcommand{\thomas}[1]{{\color{blue}Thomas:#1}}
\newcommand{\thib}[1]{{\color{magenta}Thib:#1}}
\newcommand{\franck}[1]{{\color{red}Franck:#1}}
\else
\newcommand{\thib}[1]{}
\newcommand{\thomas}[1]{}
\newcommand{\franck}[1]{}
\fi

\newif\ifshowauthors
\showauthorsfalse

\ifarxiv
  \showauthorstrue
\fi

\ifcamera
  \showauthorstrue
\fi

\title{\paperTitle}
\author{\authorBlock}

\begin{document}

\maketitle

\ifarxiv
\lhead{}
\fi
\begin{abstract}

Muon, an optimizer known for its efficiency, has a clear interpretation for matrix-valued updates, but convolutional kernels are stored as four-dimensional tensors. Standard implementations reshape these tensors into matrices, a shortcut which breaks the theoretical understanding behind Muon. To investigate this,  
we formalize the corresponding optimization objective directly in convolutional operator geometry and
introduce Convolutional Newton-Schulz (Conv-NS), which approximates the polar factor in this geometry while preserving kernel support. When applied in fast training experiments, Conv-NS and reshape-based Muon are both computationally efficient and achieve comparable accuracy on CIFAR-10 and ImageNet classification tasks. However, as one could expect a theoretically aligned Conv-NS to outperform reshape-based Muon, we investigate this mismatch between practice and theoretical understanding, with the hypothesis that exact convolutional orthogonalization may overconstrain updates. 
These findings highlight Muon's strong practical performance while opening directions for its further development on convolutions.
\ifshowauthors
Our code is publicly available at 
\href{https://github.com/thib-s/muonconv-cifar10-airbench}{github conv-muon}.
\else 
\fi

\end{abstract}

\section{Introduction}
\label{sec:intro}

A recent line of work in deep learning optimization departs from element-wise update rules such as SGD and AdamW and instead exploits the matrix structure of neural network parameters. This perspective has led to influential methods ranging from preconditioned second-order optimizers such as Shampoo~\citep{gupta2018shampoo,anil2020scalable}, which accumulate Kronecker-factored curvature estimates, to first-order methods such as Muon~\citep{jordan_muon_2024,bernstein_deriving_2025}. Muon replaces the gradient of a linear layer by an approximation of its semi-orthogonal polar factor, effectively promoting well-conditioned, near-orthogonal updates at the operator level.
This optimizer has demonstrated competitive performance across tasks such as transformer pretraining~\citep{modded_nanogpt_2024,nanochat}, image diffusion model training~\citep{schaipp2025optimization}, and large language model training~\citep{ai_practical_2025,modded_nanogpt_2024,liu_muon_2025}. We provide a more detailed discussion of related work in \Cref{sec:related}.

While this construction is natural for linear layers, convolutional layers do not fit trivially into the same framework. In this paper, we study the use of Muon on 2D convolutions. A convolutional layer is parameterized by a compact kernel
$K \in \mathbb{R}^{c_{\mathrm{out}} \times c_{\mathrm{in}} \times k_h \times k_w}$,
but acts on activations through a much larger structured linear operator $\mathcal{T}(K)$, typically Toeplitz or doubly circulant depending on the boundary conditions, that need not be materialized during the forward pass to avoid redundant computations. In practice, current implementations apply Muon by reshaping the 4D kernel into a matrix of size
$\mathbb{R}^{c_{\mathrm{out}} \times (c_{\mathrm{in}}k_hk_w)}$ and orthogonalizing this reshaped matrix.

However, orthogonalizing the reshaped kernel does not generally correspond to orthogonalizing the equivalent linear transformation matrix (see~\cref{fig:problem-framing}, upper part).
The resulting update is therefore misaligned with the operator geometry of the layer being optimized.
We formalize this mismatch in \cref{sec:background-problem} and introduce the first convolution-aware construction that directly targets the polar factor of the induced operator while preserving the finite support of the kernel.
In this paper, we use orthogonalization to refer to procedures that drive the singular values of the convolutional operator toward one (as seen in~\cref{fig:problem-framing}, right). 

Independently from the less generally understood question of generalization~\citep{wilson_marginal_2017,balles_dissecting_2018,luo2018adaptive}, we run benchmarks to prioritize speed of convergence (\cref{subsec:cifar10-imagenet}). Surprisingly, reshape-based Muon, despite its structural mismatch, and our convolution-aware alternative Conv-NS achieve comparable performance on these benchmarks,
while both substantially outperforming SGD with momentum in compute-constrained settings. These results raise the following question: \textit{why does more faithful convolutional orthogonalization not improve optimization performance?}

The insights we provide are the following:

\begin{itemize}
    \item \textbf{Standard Muon overlooks convolutional geometry.} Applying Muon to convolutions typically involves reshaping each kernel into a matrix before orthogonalization. This does not generally orthogonalize the convolution operator.
    \item \textbf{Convolutional orthogonalization is computationally practical.} We introduce Conv-NS, which preserves kernel support and empirically approaches convolutional orthogonality with near-minimal gradient modification.
    \item \textbf{Better orthogonalization does not consistently improve training.} Conv-NS and reshape-based Muon achieve competitive performance on CIFAR-10 and ImageNet, while increasing Conv-NS iterations can degrade training performance.
    \item \textbf{Extending Muon to convolutions requires more than polar factorization.} We revisit the optimization principle underlying Muon: choosing an update that maximizes predicted loss reduction while limiting its effect on layer outputs. Reshaping does not enforce this limit, while convolutional orthogonality imposes a stronger constraint.
    \item \textbf{These limitations open directions for convolutional Muon.} Both implementations already achieve strong accuracy--runtime trade-offs in the tested settings, motivating updates that better balance predicted loss reduction and control of layer outputs.
\end{itemize}

\section{Muon Foundations and the Convolution Mismatch}
\label{sec:background-problem}

\paragraph{Notations}
For any matrices $X\in\mathbb{R}^{m\times n}$, we denote by $\|X\|_F$ its Frobenius norm, by $\langle X, Y \rangle_F$ the associated inner product, and by $\|X\|_{\mathrm{op}}$ its operator (or spectral) norm. The Stiefel manifold is denoted when $m\leq n$, by $
\stief(m,n):=\left\{X\in\mathbb{R}^{m\times n}: X X^\top=I\right\}$\footnote{the condition for column orthogonal matrices $n\leq m$ is $X^\top X =I$, with $I$ of size $n \times n$},
where $I$ denotes the $m\times m$ identity matrix. When the dimensions are clear from the context, we simply write $\stief$. We denote the squared Frobenius distance between two matrices $X,Y\in\mathbb{R}^{m\times n}$, by $\mathcal{L}_{\mathrm{frob}}(X,Y) = \| X - Y \|_F^2$.
\subsection{About the Muon Update Rule}
\label{subsec:muon-background}

\begin{figure*}[t]
    \centering
    \includegraphics[width=0.96\textwidth]{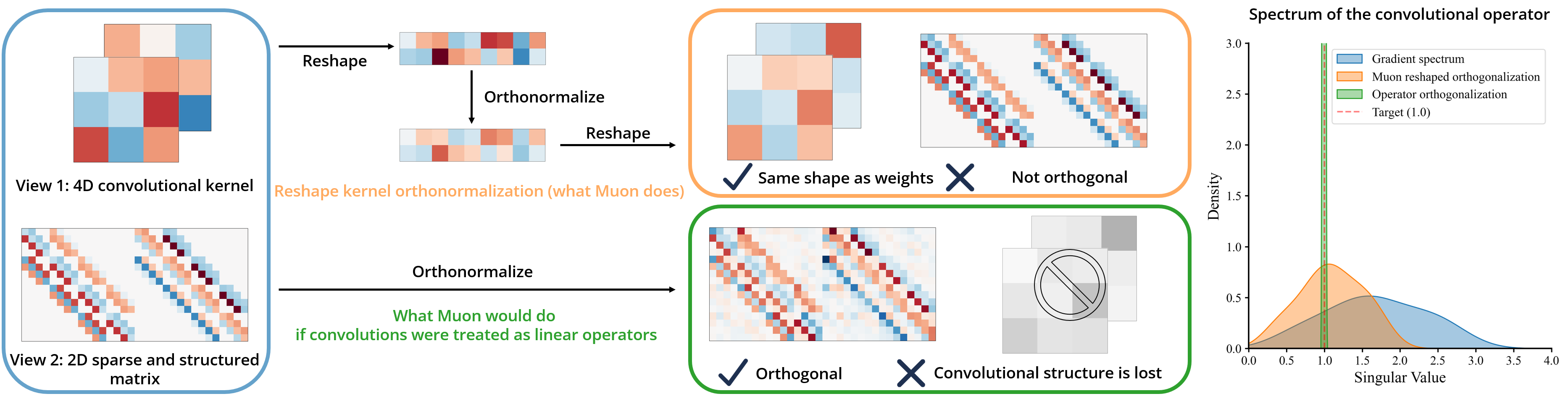}
    \caption{\textbf{Problem framing for Muon on convolutions.} Muon's current implementation (option 1) reshape the update and apply $\operatorname{PF}$, however this result in a non orthogonal convolution as observed on the singular value distibution. However there is no trivial solution since the direct application of $\operatorname{PF}$ to the equivalent linear operator leads to a an orthogonal matrix, but the structure of the convolution is lost in the process. The $3\times 3$ convolution with $c_i=2, c_o=1$ was sampled from a gaussian distribution.}
    \label{fig:problem-framing}
\end{figure*}

A dense hidden layer acts on activations via a weight matrix $W \in \mathbb{R}^{d_{\mathrm{out}} \times d_{\mathrm{in}}}$. Given a momentum or gradient matrix $G$, the Muon optimizer \citep{jordan_muon_2024} computes the update direction by approximately orthogonalize the gradient matrix.
If $G = U \Sigma V^\top$ is the singular value decomposition, then the Muon update is:
\begin{equation}
    \label{eq:PFoptimum}
    \Delta_W^\star = -\eta \operatorname{PF}(G) = -\eta U V^\top
\end{equation}
From an optimization perspective, the polar factor $\operatorname{PF}(G)$ can equivalently be viewed as the projection of $G$ onto the Stiefel manifold $\stief$,
and thus defined by a constrained minimization over the objective $\mathcal{L}_{\mathrm{frob}}(G,Q)$:
\begin{equation}
    \operatorname{PF}(G) = \underset{Q\in \stief}{\operatorname{argmin}} \quad \mathcal{L}_{\mathrm{frob}}(G,Q) \quad
    \label{eq:dense-polar-projection}
\end{equation}

Since computing an exact SVD at every step is computationally prohibitive, Muon practically approximates $\operatorname{PF}(G)$ using a small number of Newton-Schulz iterations~\citep{bjorck1971iterative, amsel_polar_2025, grishina_tight_2024}\footnote{Variants with fifth order matrix multiplication is often used}. For an appropriately scaled matrix $X$, the iteration:
\begin{equation}
    X \leftarrow (1+\beta)X - \beta X X^\top X
    \label{eq:dense-newton-schulz}
\end{equation}
drives the orthogonality penalty $\| X X^\top - I \|_F^2$ iteratively to zero, yielding the polar factor.

\subsection{How Standard Muon Handles Convolutions}
\label{subsec:how-muon-handles-convs}

Convolutional layers do not fit cleanly into this dense picture. The trainable parameters and their gradient updates are finite-support 4D kernels: $G_K \in \mathcal{M}_{\mathrm{kernel}}$, where $\mathcal{M}_{\mathrm{kernel}}=\mathbb{R}^{c_{\mathrm{out}} \times c_{\mathrm{in}} \times k_h \times k_w}$

Unfortunately, materializing the equivalent linear transformation to the convolution is infeasible at scale. Therefore, current implementations of Muon sidestep this by using a structural shortcut: they treat the 4D kernel as a 2D matrix by flattening its spatial dimensions, apply the exact same dense Newton-Schulz orthogonalization routine, and unflatten the result:
Denoting the flattening and unflattening operators as $\operatorname{flat}: \mathbb{R}^{c_{\mathrm{out}} \times c_{\mathrm{in}} \times k_h \times k_w} \to \mathbb{R}^{c_{\mathrm{out}} \times (c_{\mathrm{in}} k_h k_w)}$ and $\operatorname{unflat}: \mathbb{R}^{c_{\mathrm{out}} \times (c_{\mathrm{in}} k_h k_w)} \to \mathbb{R}^{c_{\mathrm{out}} \times c_{\mathrm{in}} \times k_h \times k_w}$, the Reshape-Kernel Orthogonalization (RKO) update is:
\begin{equation}
    \operatorname{RKO}(G_K) = \operatorname{unflat}\bigl(\operatorname{PF}(\operatorname{flat}(G_K))\bigr)
\end{equation}
This formulation effectively enforces an orthogonality constraint on the flattened kernel matrix. We can formalize the surrogate objective implicitly minimized by RKO as $\mathcal{L}_{\mathrm{flat}}$:
\begin{equation}
    \mathcal{L}_{\mathrm{flat}}(\Delta_K) = \| \operatorname{flat}(\Delta_K) \operatorname{flat}(\Delta_K)^\top - I \|_F^2
    \label{eq:lflat}
\end{equation}
By driving $\mathcal{L}_{\mathrm{flat}}$ to zero, RKO makes the flattened matrix perfectly semi-orthogonal. While computationally convenient, $\operatorname{flat}(G_K)$
ignores the true spatial and channel interaction of the convolution, optimizing a mismatched geometric surrogate. For example,
it
cannot distinguish a $3 \times 3$ convolution with $c_{\mathrm{in}}=8$ and $c_{\mathrm{out}}=8$ from a pointwise $1 \times 1$ convolution with $c_{\mathrm{in}}=72$ and $c_{\mathrm{out}}=8$, even though those objects act fundamentally differently on activations. Notably, \citet{wang2020orthogonal, li2019preventing} show that
RKO
does not lead to an orthogonal linear operator, as illustrated in the upper part of \cref{fig:problem-framing}.

\section{Our Correction: The Convolutional Polar Factor}
\label{sec:conv-polar-factor}

To correctly adapt Muon to convolutions, we must work in the space in which the layer actually acts.
For a given input activation map, convolution by a kernel $K$ is formally equivalent to a structured linear operator $\mathcal{T}(K) \in \mathbb{R}^{(c_{\mathrm{out}} h_{\mathrm{out}} w_{\mathrm{out}}) \times (c_{\mathrm{in}} h_{\mathrm{in}} w_{\mathrm{in}})}$, typically a doubly circulant or Toeplitz matrix depending on boundary conditions \citep{achour2022existence}.

The equivalent Muon update for a convolution layer would therefore be the polar factor (\Cref{eq:dense-polar-projection}) of the induced operator: $\operatorname{PF}(\mathcal{T}(G_K))\in \mathbb{R}^{(c_{\mathrm{out}} h_{\mathrm{out}} w_{\mathrm{out}}) \times (c_{\mathrm{in}} h_{\mathrm{in}} w_{\mathrm{in}})}$.
As settled for dense matrices,
the polar factor of this polar factor
$\operatorname{PF}(\mathcal{T}(G_K))$ could be approximated by iterative algorithms (\Cref{eq:dense-newton-schulz}). But, first applying Newton-Schultz on large matrices like $\mathcal{T}(G_K)$ would be costly, and the resulting polar factor generally loses its structural sparsity: $\operatorname{PF}(\mathcal{T}(G_K))$ cannot be converted back into a valid
$\mathcal{M}_{\mathrm{kernel}}$ update
(lower part of \cref{fig:problem-framing}). 
Therefore, a principled definition of the convolutional polar factor requires an extra constraint: the solution operator must remain in the subspace of
fixed-support convolutions $\mathcal{M}_{\mathrm{kernel}}$:
\vspace{-0.1mm}
\begin{equation}
    \convPF(G_K) = \underset{
    \substack{
    \Delta_K\in \mathcal{M}_{\mathrm{kernel}}\\
    \mathcal{T}(\Delta_K)\in \stief}
    }{\operatorname{argmin}} \quad \mathcal{L}_{\mathrm{frob}}(\mathcal{T}(G_K),\mathcal{T}(\Delta_K)).
    \label{eq:conv_op_lmo}
\end{equation}

Closeness to the Stiefel manifold is assessed through the semi-orthogonality residual
$\|\mathcal{T}(\Delta_K) \mathcal{T}(\Delta_K)^\top - I\|^2$. \citet{wang2020orthogonal, qi2020deep, achour2022existence} have shown that this analogously translates to a spatial self-convolution identity on the underlying kernel. We denote this kernel-level orthogonality distance as $\mathcal{L}_{\mathrm{orth}}$:
\begin{align}
    \mathcal{L}_{\mathrm{orth}}(\Delta_K)
    =&
    \|\operatorname{conv}
    \bigl(
    \Delta_K, \Delta_K,
    \mathrm{padding}=P,
    \mathrm{stride}=s
    \bigr) - I_{r0} \|_F^2
    \\
    =&
    \frac{1}{h_{\mathrm{out}} w_{\mathrm{out}}} \|\mathcal{T}(\Delta_K)\mathcal{T}(\Delta_K)^\top - I\|_F^2,
    \label{eq:lorth-row}
\end{align}
where $I_{r0}$ is the identity impulse kernel (zero everywhere except at the central position where it is the identity matrix over channels) and $h_{\mathrm{out}} w_{\mathrm{out}}$ is the number of spatial locations. The \cref{eq:lorth-row}, formally demonstrated in \citet{achour2022existence}, establishes the exact equivalence\footnote{For simplicity, we assume the row-orthogonal (wide) case $c_{\mathrm{out}} \le c_{\mathrm{in}} s^2$, analogous to our dense formulation in \cref{subsec:muon-background}. In the column-orthogonal case, $\mathcal{L}_{\mathrm{orth}}(\Delta_K)$ cannot reach $0$ (its global minimum is bounded by $c_{\mathrm{out}} - c_{\mathrm{in}} s^2$), but its minimizer nonetheless yields a semi-orthogonal convolutional operator.}. The $\convPF(G_K)$ can thus be expressed using the $\mathcal{L}_{\mathrm{orth}}$:
\begin{equation}
    \convPF(G_K) = \underset{
    \substack{
    \Delta_K\in \mathcal{M}_{\mathrm{kernel}}\\
    \mathcal{L}_{\mathrm{orth}}(\Delta_K) = 0}
    }{\operatorname{argmin}} \quad \mathcal{L}_{\mathrm{frob}}(\mathcal{T}(G_K),\mathcal{T}(\Delta_K)).
    \label{eq:conv_op_lmo2}
\end{equation}

Thus the convolutional polar factor is the equivalent for convolution operators of the target update of Muon for dense matrices. However, neither the SVD decomposition of the operator $\mathcal{T}(G_K)$, nor the RKO heuristic of Muon is able to compute or even approximate $\convPF(G_K)$.

\section{Convolutional Newton-Schulz: A Convolution-Aware Alternative}
\label{sec:conv-newton-schulz}

\begin{figure}[t]
    \begin{minipage}{0.44\textwidth}
        \begin{algorithm}[H]
            \caption{\textbf{Conv-NS}}
            \label{alg:convNS_k}
            \begin{algorithmic}[1]
                \Require Gradient kernel $G_K$, iterations $T$, step size $\beta$
                \State $\Delta_K^{(0)} \gets \operatorname{Normalize}(G_K)$
                \For{$t=0,\ldots,T-1$}
                    \State $C^{(t)} \gets \Delta_K^{(t)}
                    \circledast
                    (\Delta_K^{(t)})^\top
                    \circledast
                    \Delta_K^{(t)}$
                    \State $\widetilde{\Delta}_K^{(t)}
                    \gets
                    (1+\beta)\Delta_K^{(t)}-\beta C^{(t)}$
                    \State $\Delta_K^{(t+1)}
                    \gets
                    \alpha\Pi_{\mathrm{supp}}\Bigl(\widetilde{\Delta}_K^{(t)}\Bigr)$
                \EndFor
                \State \Return $\Delta_K^{(T)}$
            \end{algorithmic}
        \end{algorithm}

        \caption{\textbf{Conv-NS orthogonalizes convolutional updates while preserving kernel support.} Each iteration applies a Newton--Schulz step through block convolutions, then crops the result to the original kernel support.}
    \end{minipage}
    \hfill
    \begin{minipage}{0.53\textwidth}

        \centering
        \includegraphics[width=\linewidth]{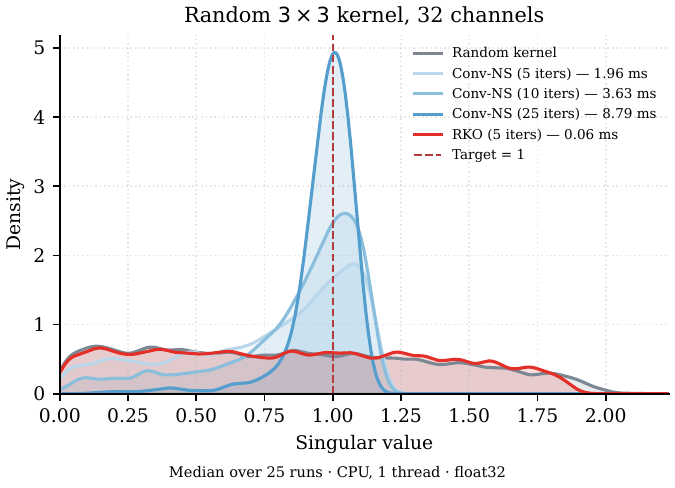}
        \caption{\textbf{Conv-NS converges to convolutional orthogonality.} Increasing Conv-NS iterations concentrates the spectrum of singular values around one, whereas RKO, used in standard Muon implementations, leaves it dispersed. All methods start from the same random $3\times3$ kernel.}
        \label{alg:convNS_svd}
        \label{fig:MuonSVD}
    \end{minipage}
\end{figure}

We
introduce Convolutional Newton-Schulz (Conv-NS), an efficient convolution-aware iterative algorithm that converges to $\convPF(G_K)$. Conv-NS serves a dual purpose: it provides a convenient method to orthogonalize the convolution gradient matrix that can be used as a gradient update,
evaluated in \cref{subsec:cifar10-imagenet}, and a principled reference for measuring how RKO differs from the true convolutional polar factor. We exploit this second role in \cref{subsec:why-reshape-works} to characterize RKO on real training gradients.
The proposed algorithm enables to compute the ground-truth comparison that the existing literature lacks.

We propose an iterative projected gradient descent scheme to jointly  drive to  orthogonality and constrain to the $\mathcal{M}_{\mathrm{kernel}}$ manifold. After each iteration $t$ of the order-3 Newton-Schulz (also known as Bj\"orck) algorithm \citep{bjorck1971iterative}, a structural projection $\Pi_{\mathrm{supp}}$ is applied to force the kernel back onto $\mathcal{M}_{\mathrm{kernel}}$:
\begin{align}
    \mathcal{T}(\widetilde{\Delta}_K^{(t)}) & \gets (1+\beta)\mathcal{T}
    (\Delta_K^{(t)}) - \beta \mathcal{T}(\Delta_K^{(t)})\mathcal{T}(\Delta_K^{(t)})^\top\mathcal{T}(\Delta_K^{(t)}) \\
    \Delta_K^{(t+1)} &
    \gets
    \alpha\Pi_{\mathrm{supp}}\Bigl(\widetilde{\Delta}_K^{(t)}\Bigr),
    \label{eq:conv-ns-matrix}
\end{align}

where $\beta_t$ is the orthogonalization step size, and $\alpha$ a normalization factor to avoid expansion (justification of this parameter is given in \cref{sec:ap-convNS-impl}). Without the projection (\Cref{eq:conv-ns-matrix}), the kernel size will grow at each iteration, and would converge to $\operatorname{PF}(\mathcal{T}(G_K))$ as described in \Cref{sec:conv-polar-factor}.

In practice, these iterations would require huge matrix multiplications with high computational redundancy due to the doubly circulant or Toeplitz structure of $\mathcal{T}(\Delta_K^{(t)})$.
We instead propose to use block-convolution to perform the NS-3 scheme directly on the kernels $\Delta_K^{(t)}$.
Following \citet{li2019preventing,boissinadaptive}, the \emph{block convolution} operator $\circledast$ combines two kernels into a single kernel representing their composition (without materializing their full convolution matrices). For $A \in \mathbb{R}^{c_{\mathrm{int}} \times c_{\mathrm{in}} \times k^A_h \times k^A_w}$ and $B \in \mathbb{R}^{c_{\mathrm{out}} \times c_{\mathrm{int}} \times k^B_h \times k^B_w}$, their block convolution is defined with respect to the induced linear operators as:
\begin{equation}
    \mathcal{T}(B \circledast A) = \mathcal{T}(B)\mathcal{T}(A)
    \quad \text{with} \quad
    B \circledast A \in \mathbb{R}^{c_{\mathrm{out}} \times c_{\mathrm{in}} \times (k^A_h+k^B_h-1) \times (k^A_w+k^B_w-1)}.
\end{equation}
We also use the definition of the transposed convolution kernel $\mathcal{T}(\Delta_K^\top) = \mathcal{T}(\Delta_K)^\top$, which can be computed efficiently in kernel space.

The complete Convolutional Newton-Schultz algorithm is provided as a kernel formulation in \Cref{alg:convNS_k}. Lines 3 and 4 are equivalent to the order-3 Newton-Schulz iteration. Importantly, the block convolution expands the spatial support of the operands, so $C^{(t)}$ kernel size is generally larger than $\Delta_K^{(t)}$ one.  The  projection $\Pi_{\mathrm{supp}}$ crops the expanded kernel by dropping the values outside the original $k_h \times k_w$ support, to converge to $\operatorname{convPF}(\mathcal{T}(G_K))$.

In \Cref{sec:appendix-convergence}, we provide the full proofs, together with an explicit gradient formula, showing that the Conv-NS algorithm converges to a minimum of $\mathcal{L}_{\mathrm{orth}}$. \cref{sec:ap-convNS-impl} provide more details about the exact implementation and how strided and grouped convolutions are covered. In \Cref{app:grabbing}, we also propose convolutional adaptations of the main grafting factors commonly applied to Muon updates.

\begin{figure}[ht]
    \centering
    \includegraphics[width=\columnwidth]{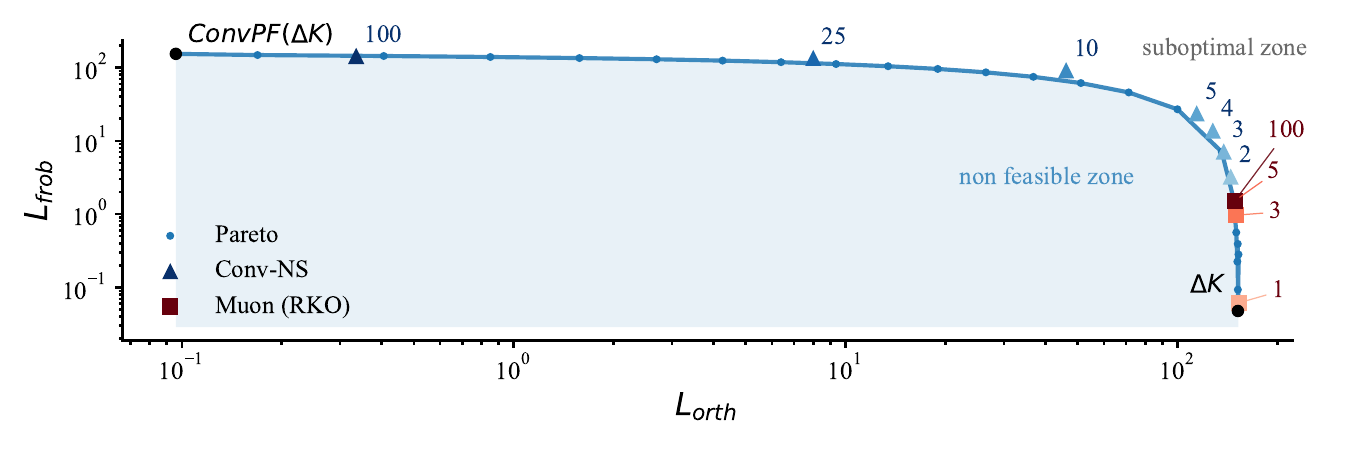}
    \caption{\textbf{Conv-NS approaches convolutional orthogonality with near-minimal gradient modification.} Conv-NS follows the numerically estimated orthogonality--fidelity frontier: stronger orthogonality requires progressively larger departures from the original gradient along this frontier. RKO remains near the low-modification end, with higher orthogonality error. Metrics are averaged across convolutional layers and three stages of CIFAR-10 training.}
    \label{fig:grad-pareto-real}
\end{figure}

\paragraph{Computing a ground truth for ConvPF.}
Since $\convPF$ minimizes gradient modification under convolutional orthogonality (\cref{eq:conv_op_lmo2}), we construct a numerical reference by relaxing this constraint into a penalty:
\begin{equation}
    \underset{\Delta_K\in\mathcal{M}_{\mathrm{kernel}}}{\min}
    \quad \lambda\mathcal{L}_{\mathrm{orth}}(\Delta_K)
    +(1-\lambda)\mathcal{L}_{\mathrm{frob}}(\Delta_K,G_K),
    \qquad 0\leq\lambda<1.
    \label{eq:convpf-penalty}
\end{equation}
Here, $G_K$ is the optimizer momentum rescaled to unit operator norm to separate orthogonalization-induced modification from initial normalization.
Varying $\lambda$ trades fidelity to the original gradient against convolutional orthogonality. Since this objective is nonconvex, we estimate the orthogonality--fidelity frontier using Adam with simulated annealing noise injection for each weighting, retaining the best result across multiple trials (\cref{subsec:pareto-analysis}). This provides a numerical reference for evaluating Conv-NS, rather than a certificate of global optimality.

\paragraph{Empirical validation:}\Cref{fig:MuonSVD} compares the singular-value spectra of the RKO Muon update and Conv-NS after different numbers of iterations. It shows that Conv-NS progressively approaches an orthogonal convolution operator.
We compare Conv-NS against this numerical reference on synthetic kernels (\Cref{fig:pareto-baselines}) and real training gradients (\Cref{fig:grad-pareto-real}). For the latter, we collect snapshots at 1\%, 50\%, and 99\% of a CIFAR-10 training run using the architecture of \cref{subsec:cifar10-imagenet}. We evaluate Conv-NS at iteration counts $\{2,3,4,5,10,25,100\}$ against the per-layer frontiers and average metrics across layers and stages. Conv-NS remains close to the estimated frontier, approaching orthogonality with near-minimal gradient modification among the solutions found numerically. Increasing the iteration count moves along this trade-off toward stronger orthogonality and larger gradient modification. Per-stage results are reported in \cref{sec:appendix-per-stage-pareto}.

\section{Experiments: CIFAR-10 and ImageNet classification}
\label{subsec:cifar10-imagenet}
In the following, we compare Conv-NS Muon against the standard reshape-based Muon on several image-classification benchmarks with convolutional architectures.

\paragraph{Experimental setup.}
We evaluate Conv-NS in two classification settings that stress different aspects of convolutional optimization.
First, CIFAR-10 AirBench94~\citep{jordan_airbench_2024} is a speed-optimized benchmark tailored to reach 94\% accuracy in less than 4 seconds on a single RTX4090. Second, a standard ResNet-50 with a \texttt{timm}-style recipe shortened to 30 epochs is trained on ImageNet-1k~\citep{deng2009imagenet}. In addition to these standard short-training budgets, we evaluate an extreme 1-epoch setting on both datasets. This 1-epoch regime is specifically designed to isolate early optimization efficiency and highlight differences before network capacity becomes the limiting factor.
On CIFAR-10, reshape-based Muon and SGD use the original recipe's
heavily optimized settings, while Conv-NS is tuned with a 1000-configuration budget.
On ImageNet, reshape-based Muon and Conv-NS receive equal 48-configuration grid-search budgets.
Full experimental details are reported in \cref{sec:appendix-classification-details}; results are summarized in \cref{tab:cifar10-imagenet-results}.

\paragraph{Both Muon-Reshape and Muon-ConvNS are efficient.}
The results highlight that Muon variants drastically outperform SGD in short training settings. This is particularly prevalent in the extreme 1-epoch ImageNet setting, where both Muon (RKO) and the proposed Muon-Conv-NS  methods achieve an accuracy above $52\%$, showcasing a massive gap compared to SGD's $30.47\%$. This is, to our knowledge, the best result that can be achieved without techniques such as replay buffers~\citep{hu2021one}. A deeper comparison with these techniques can be found in \cref{app:one-pass-imagenet}. However, as training gets longer, performance becomes increasingly capped by the network's capacity rather than the optimizer's speed. This dynamic explains why the performance gap between Muon and SGD narrows in the 8-epoch and 30-epoch settings, and it also contextualizes why SGD remains highly competitive for very long training schedules, such as the 600-epoch runs used in modern ResNet baselines~\citep{wightman2021resnet}, which prioritize generalization.

\paragraph{Conv-NS closes the theory gap with competitive efficiency.}
Conv-NS successfully fixes the theoretical operator mismatch while achieving performance that is on par with the reshape-based heuristic. On CIFAR-10, Conv-NS matches the accuracy of reshape-based Muon ($93.98\%$ vs.\ $94.02\%$). On ImageNet, it achieves a competitive $74.53\%$ accuracy, only slightly below reshape-based Muon's $74.82\%$. In terms of runtime, the principled approach is highly practical: it introduces only a marginal overhead on the small CIFAR-10 network and actually runs slightly faster than the reshape baseline on the larger ResNet-50 architecture. This demonstrates that closing the theory-practice gap does not require sacrificing computational efficiency. Additional comparisons with other unpublished baselines are reported in \cref{subsec:cifar10-community-baselines}.

\section{What should convolutional orthogonalization optimize?}
\label{subsec:why-reshape-works}
The near-parity with Conv-NS observed on both CIFAR-10 and ImageNet raises a central question:
\textit{Why does improving operator orthogonalization in Muon for convolution updates fail to outperform the existing heuristic?}

\begin{table}[t]
    \centering
    \small
    \begin{tabular}{llccc}
        \toprule
        \textbf{Dataset (Budget)} & \textbf{Optimizer} & \textbf{Val.\ Acc.\ (\%)} & \textbf{Train Acc.\ (\%)} & \textbf{Runtime} \\
        \midrule
        \textbf{CIFAR-10 (8 epochs)} & Muon - reshape & $94.02 \pm 0.14$ & $97.66 \pm 0.31$ & $3.074 \pm 0.005$s \\
        8 layers CNN, $bs=2000$ & Muon - Conv-NS & $93.98 \pm 0.14$ & $97.17 \pm 0.38$ & $3.608 \pm 0.018$s \\
        1xRTX4090 & SGD            & $91.25 \pm 0.09$ & $93.00 \pm 0.35$ & $2.975 \pm 0.009$s \\
        \midrule
        \textbf{CIFAR-10 (1 epoch)} & Muon - reshape & $87.18 \pm 0.002$ & $86.46 \pm 0.01$ & $0.509 \pm 0.003$s \\
        8 layers CNN, $bs=250$ & Muon - Conv-NS & $87.75 \pm 0.002$ & $85.44 \pm 0.02 $ & $0.559 \pm 0.003$s \\
        1xRTX4090 & SGD            & $84.99 \pm 0.002 $ & $ 83.52 \pm 0.02 $ & $0.496 \pm 0.004$s \\
        \midrule
        \textbf{ImageNet (30 epochs)} & Muon - reshape & $74.82$ & $60.51$ & $1$\,h $54$m $06$s $\pm 12$m $51$s \\
        ResNet-50, $bs=2048$& Muon - Conv-NS & $74.53$ & $58.70$ & $1$\,h $49$m $20$s $\pm 1$m $42$s \\
        4xH100& SGD            & $70.25$ & $48.03$ & $1$\,h $33$m $56$s $\pm 8$m $07$s \\
        \midrule
        \textbf{ImageNet (1 epoch)}& Muon - reshape & $54.05$ & $28.10$ & $14$m $57$s $\pm 4$m $57$s \\
        ResNet-50, $bs=512$ & Muon - Conv-NS & $52.95$ & $26.74$ & $13$m $13$s $\pm 44$s \\
        1xH100 & SGD            & $30.47$ & $10.10$ & $13$m $00$s $\pm 3$m $46$s \\
        \bottomrule
    \end{tabular}
    \caption{\textbf{CIFAR-10 and ImageNet classification results.} Conv-NS essentially matches reshape-based Muon on CIFAR-10 AirBench94 and remains competitive on ImageNet-1k, while reshape-based Muon keeps a slight overall edge. Train and validation accuracies are shown alongside runtimes for both the full budget (8/30 epochs) and the 1-epoch setting.}
    \label{tab:cifar10-imagenet-results}
\end{table}

\paragraph{More accurate orthogonalization can hurt training.}
We sweep the number of Conv-NS iterations in the one-epoch CIFAR-10 setting of \cref{subsec:cifar10-imagenet}. Under training-loss selection, validation accuracy peaks at 10 iterations ($87.56\%$), then decreases at 25 and 100 iterations ($86.42\%$ and $85.13\%$); training loss also worsens (\Cref{tab:1-epoch-sweep}). The same qualitative trend holds under validation- and training-accuracy-based selection (\Cref{tab:1-epoch-selection-criteria}). In the eight-epoch setting, \Cref{fig:ablation-overconstraint} shows that increasing the iteration count substantially reduces $\mathcal{L}_{\mathrm{orth}}$ throughout training, yet the 100-iteration setting generally learns more slowly than intermediate iteration counts. Thus, more accurate convolutional orthogonalization does not consistently improve training performance. Learning-rate sweeps for each iteration count are provided in \cref{sec:appendix-8-epoch}.
\begin{table}[h]
    \centering
    \small
    \setlength{\tabcolsep}{5pt}
    \begin{tabular}{lccccc}
        \toprule
        Iterations (Conv-NS) & 1 & 5 & 10 & 25 & 100 \\
        \midrule
        Val. Acc. (\%) & 84.93 & 86.35 & \textbf{87.56} & 86.42 & 85.13 \\
        Train Acc. (\%) & 86.32 & 86.72 & \textbf{87.44} & 86.52 & 86.32 \\
        Train Loss & 1.1693 & 1.1519 & \textbf{1.1352} & 1.1571 & 1.1821 \\
        Muon LR & 0.579 & 0.513 & 0.487 & 0.200 & 0.094 \\
        Muon Momentum & 0.799 & 0.724 & 0.250 & 0.406 & 0.706 \\
        \bottomrule
    \end{tabular}
    \vspace{0.25em}
    \caption{\textbf{One-epoch CIFAR-10 sweep results.} We report the single top configuration per sweep under training-loss selection; each configuration line is already averaged over repeated runs with the same hyperparameters. The best one-epoch operating point occurs at 10 Conv-NS iterations.}
    \label{tab:1-epoch-sweep}
\end{table}

\paragraph{Lower orthogonality errors do not guarantee better training.}
We train the same CIFAR-10 network for eight epochs with RKO and Conv-NS using $n\in\{1,5,10,25,100\}$, with five runs per setting. Conv-NS settings share a learning rate of $0.52$, momentum $0.6$, and damping $0.99$; RKO uses learning rate $0.24$ and the same momentum. We track training and validation accuracy alongside $\mathcal{L}_{\mathrm{orth}}$, $\mathcal{L}_{\mathrm{flat}}$, and $\mathcal{L}_{\mathrm{frob}}$, averaged across convolutional layers.
RKO's performance raises the question of whether its flattened-kernel objective $\mathcal{L}_{\mathrm{flat}}$ is better suited to training than $\mathcal{L}_{\mathrm{orth}}$. However, \Cref{fig:ablation-overconstraint} shows that high-iteration Conv-NS reduces both objectives and can achieve lower $\mathcal{L}_{\mathrm{flat}}$ than RKO without improving validation accuracy. Thus, lower values of either orthogonality objective alone do not explain the performance ordering. Full residual comparisons are reported in \cref{sec:appendix-hierarchical}. Since Conv-NS empirically approaches convolutional orthogonality with near-minimal gradient modification (\Cref{sec:conv-newton-schulz}), these results motivate examining whether the $\convPF$ target itself imposes unnecessary constraints.

\paragraph{Convolutional structure limits the LMO justification for polar factorization.}
Several works in the literature have proposed explanations for the empirical performance of Muon and gradient orthogonalization. One of the most prominent interpretations relates Muon to approximately solving a Linear Minimization Oracle (LMO) over a spectral-norm ball of radius $\eta$\footnote{In the following, $\eta$ may be omitted and replaced by the unit ball for simplicity}~\citep{pethick_training_2025}:
\begin{equation}
    \Delta_W^\star =
    \underset{\|\Delta_W\|_{\mathrm{op}} \le \eta}{\arg\min}
    \langle G, \Delta_W \rangle_F .
    \label{eq:dense-lmo}
\end{equation}
This formulation measures the size of the update through the operator norm (spectral norm), thereby constraining the worst-case change in the layer output \citep{bernstein_modular_2025}. By duality between the spectral and nuclear norms, the closed-form solution is proportional to the semi-orthogonal polar factor of the gradient \citep{bernstein_old_2024,newhouse_understanding_2025}. For completeness, we provide the proof of the optimality of $\operatorname{PF}(G)$ in \cref{app:proofLMO}.

For convolutional layers, the LMO must instead be formulated in the operator space in which the layer acts. The minimization is therefore restricted to structured linear operators induced by fixed-support kernels, with $G_K,\Delta_K\in\mathcal{M}_{\mathrm{kernel}}$:
\begin{equation}
    \Delta_K^\star =
    \underset{
    \substack{
    \Delta_K\in \mathcal{M}_{\mathrm{kernel}}\\
    \|\mathcal{T}(\Delta_K)\|_{\mathrm{op}} \le 1
    }
    }{\arg\min}
    \langle \mathcal{T}(G_K),\mathcal{T}(\Delta_K)\rangle_F .
    \label{eq:conv-lmo}
\end{equation}

The connection between the convolutional LMO in \cref{eq:conv-lmo} and the convolutional polar factor $\convPF(G_K)$ follows the same proof as for dense matrices (\Cref{app:proofLMO}). As a first indication, $\convPF(G_K)$ satisfies the unit-ball constraint $\|\mathcal{T}(\Delta)\|_{\mathrm{op}}\leq 1$ by construction, unlike the RKO solution (\cref{subsec:how-muon-handles-convs}). Moreover, decomposing $\mathcal{L}_{\mathrm{frob}}(\mathcal{T}(G_K),\mathcal{T}(\Delta_K))$ shows that $\convPF(G_K)$ solves the LMO over the Stiefel-constrained feasible set (\Cref{app:proofconvLMO}):

\begin{equation}
    \label{eq:convPFargmin}
    -\convPF(G_K) \in \underset{
    \substack{
    \Delta_K\in \mathcal{M}_{\mathrm{kernel}}\\
    \mathcal{T}(\Delta_K)\in \stief}
    }
    {\operatorname{argmin}} \quad \langle \mathcal{T}(G_K), \mathcal{T}(\Delta_K) \rangle_F \quad
\end{equation}
The main difference between \Cref{eq:conv-lmo} and \Cref{eq:convPFargmin} lies in the minimization set which are respectively $\|\mathcal{T}(\Delta)\|_{\mathrm{op}}\leq 1$ and $\mathcal{T}(\Delta_K)\in \stief$.

Finally, in the dense case, polar-factor optimality follows from $\operatorname{CH}(\stief)$ (the Convex Hull of $\stief$) being the operator-norm unit ball. Does this remain true under the kernel-size constraint in \cref{eq:convPFargmin}?
Indeed, the optimum $\mathcal{T}(\convPF(G_K))$ lies both in $\mathcal{T}(\mathcal{M}_{\mathrm{kernel}})= \left\{ \mathcal{T}(\Delta), \Delta\in \mathcal{M}_{\mathrm{kernel}}\right\}$, and on the Stiefel manifold $\stief$.
Therefore, the convex hull of their intersection is contained in the intersection of their convex hulls:
\begin{equation}
    \operatorname{CH}(\mathcal{T}(\mathcal{M}_{\mathrm{kernel}}) \cap \stief) \subseteq \mathcal{T}(\mathcal{M}_{\mathrm{kernel}}) \cap \operatorname{CH}(\stief)=\left\{ \Delta \in \mathcal{M}_{\mathrm{kernel}} \text{ such that } \|\mathcal{T}(\Delta)\|_{op}\leq 1\right\}
\end{equation}
The right-hand side is exactly the feasible set of the convolutional LMO in
\cref{eq:conv-lmo}.
Since the inclusion is generally strict,
$\convPF(G_K)$ \textit{is optimal over a more constrained set than the operator-norm unit ball restricted to} $\mathcal{T}(\mathcal{M}_{\mathrm{kernel}})$.

This distinction offers a possible explanation for the previous results: approaching $\convPF$ more accurately enforces orthogonality constraints beyond what the convolutional LMO requires. The accompanying increase in gradient modification may therefore hinder early optimization, even as both $\mathcal{L}_{\mathrm{orth}}$ and $\mathcal{L}_{\mathrm{flat}}$ decrease (\Cref{tab:1-epoch-sweep,fig:ablation-overconstraint}).


Both approaches therefore have limitations: reshape-based Muon does not enforce the convolutional operator-norm constraint (under constraint), while Conv-NS targets the stronger orthogonality constraint (over constraint). Neither is guaranteed to solve the convolutional LMO; a principled extension of Muon to convolutions requires more than computing the convolutional polar factor.

\begin{figure*}[t]
    \centering
    \includegraphics[width=\textwidth]{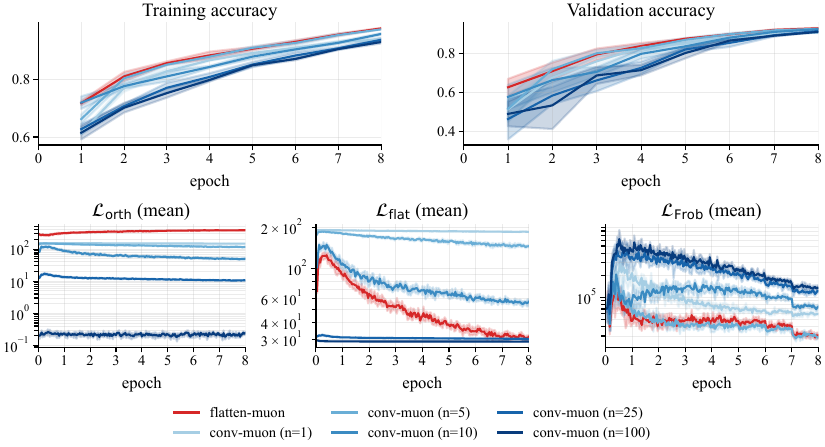}
    \caption{\textbf{Better convolutional and flattened orthogonality does not guarantee better training.} Increasing Conv-NS iterations lowers both $\mathcal{L}_{\mathrm{orth}}$ and $\mathcal{L}_{\mathrm{flat}}$, yet high iteration counts slow early learning and increase gradient modification. Curves show means over five eight-epoch CIFAR-10 runs; shaded bands indicate one standard deviation.}
    \label{fig:ablation-overconstraint}
\end{figure*}

\section{Conclusion}
\label{sec:conclusion}



In this work, we revisited Muon for convolutional layers from the perspective of the induced convolution operator. We showed that the standard reshape-based implementation does not, in general, orthogonalize this operator.

We introduced Convolutional Newton-Schulz (Conv-NS), a convolution-aware procedure that preserves kernel support and converges toward the convolutional polar factor. 
Despite enforcing better operator-level orthogonality, Conv-NS only achieves performance comparable to reshape-based Muon on CIFAR-10 and ImageNet. 
Moreover, increasing the number of Conv-NS iterations could degrade performance, suggesting that exact convolutional orthogonality may overconstrain the update.

We finally relate this observation to the LMO interpretation of Muon: the convolutional polar factor is optimal under the Stiefel constraint, 
but not necessarily the minimizer of the full operator-norm LMO.
This suggests that adapting Muon to convolutions requires balancing operator-level control with sufficient freedom in the update.

Finally, future work could explore more various training tasks like generative modeling or self-supervised representation learning with Conv-NS; along with its extensions for non-2D convolutional layers. Indeed, real training gradient analysis on classification tasks seems to indicate that reshape-based Muon empirically finds a compromise between gradient fidelity and orthogonality, this is not the case on synthetic gradients of different distributions (see~\cref{fig:pareto-baselines}). 
This distribution-dependent optimality indicates that reshape-based Muon could be suboptimal on different training tasks with different gradient distributions.

\ifshowauthors
\section*{Acknowledgments}

This work was carried out within the DEEL project,\footnote{\url{https://www.deel.ai/}} which is part of IRT Saint Exupéry and the ANITI AI cluster. The authors acknowledge the financial support from DEEL's Industrial and Academic Members and the France 2030 program - Grant agreements n°ANR-10-AIRT-01 and n°ANR-23-IACL-0002.

\noindent This work was granted access to the HPC resources of IDRIS under the allocation 2025-AD011016381 made by GENCI. 
\fi

\ifarxiv
\else
\subsection*{AI use statement}

In this work, we used generative AI tools for coding assistance in this project, we limited the use of this tooling to boilerplate functions and utilities.
Every function or class written with AI assistance was rigorously checked and tested. 
The main optimizer class is nearly entirely human written except for checked boilerplate code.
Additionally, AI tools were used for polishing grammatical errors in the writing of the main paper. 
AI tooling was used to check the soundness of the existing human written proofs that were checked previously by all authors. 
AI tooling was also used to improve the visual formatting of pre-existing human written plotting scripts, these scripts \textit{only} changed the formatting of said plots.
All concepts and novel ideas in this paper are solely the authors’ work. No AI system was used for ideation or concept generation.
AI systems were tasked to identify potential gaps in the authors’ manually conducted related-work survey.
The additional works it found were all manually checked for existence and relevance to the paper's message and studies. 
We take responsibility for the final content of this work,
including text, claims or artifacts produced with the aid of generative AI.

\subsection*{Reproducibility statement}

Along with this work, we provide access to the code used to run the experiments of the paper.
More specific details are given in~\cref{sec:appendix-classification-details}.
\fi


\bibliographystyle{iclr2027_conference}
\bibliography{files/11_references}

\clearpage
\appendix

\section{Related Work}
\label{sec:related}

\subsection{Muon and Matrix-Whitening Optimization}
The development of Linear Minimization Oracle (LMO) and matrix-whitening optimizers, such as Muon~\citep{jordan_muon_2024,bernstein_deriving_2025} and SOAP~\citep{vyas2025soap,wen_fantastic_2025}, aims to improve upon standard element-wise algorithms like AdamW. \citet{frans_what_2025} argue that the gains of matrix-whitening optimizers are not explained solely by spectral normalization: in their Transformer language-modeling experiments, variance-adapted variants such as Adam, SOAP, and AdaMuon outperform sign-descent counterparts such as Signum, SPlus, and Muon. Applying such optimizers alters the implicit bias of the network, fundamentally shifting convergence toward max-margin solutions with respect to the spectral norm rather than prioritizing simple dominant components~\citep{dragutinovic_use_2026,vasudeva_how_2025}. While this structural bias often benefits generalization on complex distributions, controlling the resulting weight drift remains challenging, leading to variants like Spectral Sphere Optimizer (SSO)~\citep{xie_controlled_2026} which enforces strict module-wise spectral properties. Despite reports of substantial compute-time frontier breakthroughs and improved large-batch efficiency~\citep{ai_practical_2025}, recent rigorous scaling laws show that the speedup relative to AdamW diminishes at larger parameter scales depending on careful hyperparameter tuning~\citep{wen_fantastic_2025}. Furthermore, theoretical stepsizes are often bridged to empirical reality by accounting for inexact matrix updates~\citep{shulgin_beyond_2026} and layer-wise geometry~\citep{riabinin_gluon_2025}.

\subsection{Community Explorations for Convolutional Orthogonalization}
The structural gap between 4D kernels and 2D spectral optimizers has spurred recent community efforts to adapt Muon for CNNs, notably Freq-Muon~\citep{kim2026frequency,cesista2026freqmuon}, which orthogonalizes per-frequency channel-mixing matrices via the DFT, and Matrixizer~\citep{opooladz_matrixizer}, which searches for an optimal 2D factorization of the gradient tensor before applying Newton--Schulz.
These implementations highlight an active demand for a CNN-compatible Muon, yet no peer-reviewed work has formalized these approaches or provided rigorous evaluations.
We describe both methods in detail and report appendix-only comparisons in \cref{sec:appendix-community-baselines}.

\section{Implementation of Conv-NS}
\label{sec:ap-convNS-impl}

\begin{algorithm}[h]
\caption{Convolutional Newton--Schulz (Conv-NS)}
\label{alg:conv-ns}
\begin{algorithmic}[1]
\Require Kernel $K$, stride $s=(s_h,s_w)$, groups $g$
\Require $N=12$, $\beta=0.5$, $\alpha=0.91$, $\varepsilon=10^{-2}$
\State $(c_o,c_i/g,k_h,k_w)\gets\operatorname{shape}(K)$
    
\State $K\gets\Call{SpectralRescale}{K}$
    \Comment{Three Gram iterations}
\State $p\gets
    \bigl(s_h\lfloor(k_h-1)/s_h\rfloor,\,
          s_w\lfloor(k_w-1)/s_w\rfloor\bigr)$
\For{$t=1,\ldots,N$}
    \State $B\gets\Call{PackGroups}{K,g}$ \Comment{Only for grouped convolutions}
    \State $A\gets\Call{Conv2D}{B,K;\,s,p,g}$
    \State $C\gets\Call{ConvTranspose2D}{A,K;\,s,p,g,0}$
        \Comment{Performs Convolution and cropping}
    \State $C\gets\Call{UnpackGroups}{C,g}$ \Comment{Only for grouped convolutions}
    \State $K\gets\alpha\bigl((1+\beta)K-\beta C\bigr)$
\EndFor
\State \Return $K/\alpha$
\end{algorithmic}
\end{algorithm}
We initialize the kernel by spectral rescaling using three Gram
iterations~\citep{delattre2023efficient, delattre2024spectral}. Group packing rearranges
the kernel so that grouped convolutions can process its output
filters as a batch; unpacking restores the original layout.
The convolution--transposed-convolution pair evaluates the cubic
Newton--Schulz correction on the original kernel support, with
cropping implemented implicitly by the padding choices.
We use a constant $\beta=0.5$: increasing $\beta$ can destabilize
the iteration, whereas decreasing it slows convergence.
Although the unprojected polynomial preserves the operator-norm
unit ball for $0<\beta\leq 0.5$, support cropping need not do so.
Damping by $\alpha=0.91$ provides a stability margin intended to
keep the iterates inside this ball. The final division by $\alpha$
removes the last damping factor.
The block-convolution derivation concerns ungrouped, unit-stride
convolutions; the grouped and strided extensions were checked
by unit tests.

\paragraph{Efficiency.}
Cropping preserves the original kernel support, giving each
iteration a fixed computational and memory cost for fixed kernel
dimensions, stride, and group count. The Conv-NS loop operates
directly on real-valued kernels and requires no FFT. It therefore
avoids complex-valued frequency-domain arithmetic and transforms
on the activation-sized grid, whose spatial dimensions are
typically much larger than the kernel support.

\section{Convergence analysis}
\label{sec:appendix-convergence}
\label{subsec:conv-ns-convergence}
\label{sec:appendix-newton-schulz}
\label{sec:appendix-gradient}

The convolutional Newton--Schulz update should be interpreted in two stages.
Without the support projection, it is exactly the classical Bj\"orck / Newton--Schulz iteration written in kernel coordinates.
Once the projection onto fixed spatial support is inserted, the method is no longer exact Newton--Schulz on the full convolution operator; it becomes a projected gradient method for a support-constrained orthogonality objective.
This objective coincides with the $\mathcal{L}_{\mathrm{orth}}$ surrogate of \citet{achour2022existence} in the stride-aware setting.

\paragraph{Exact block-convolution form of Newton--Schulz when $S=1$.}
Assume circular padding and unit stride.
Let
\[
\widehat K^{(t+1)}
=
(1+\beta)\widehat K^{(t)}
-
\beta\,\widehat K^{(t)} \circledast (\widehat K^{(t)})^\top \circledast \widehat K^{(t)},
\]
where the support of $\widehat K^{(t)}$ is allowed to grow with $t$.
Define
\[
X^{(t)} := \mathcal T_{\mathrm{circ}}(\widehat K^{(t)}).
\]

\begin{proposition}[Unprojected block-convolution Newton--Schulz]
For every $t$,
\[
X^{(t+1)}
=
(1+\beta)X^{(t)} - \beta X^{(t)}(X^{(t)})^\top X^{(t)}.
\]
Hence the unprojected kernel iteration is exactly the classical Bj\"orck / Newton--Schulz iteration on the induced convolution operator.
In particular, if $\|X^{(0)}\|_2 \le 1$ and $0<\beta\le \frac12$, then $X^{(t)}$ converges to the polar factor  $\operatorname{PF}(X^{(0)})$.
\end{proposition}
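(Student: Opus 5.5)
The proof has two parts. First, an algebraic identity between kernels and operators, which gives the displayed recursion by induction on $t$. Second, a spectral argument for the scalar Newton--Schulz map, which gives convergence.

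For the algebraic part, the key fact is that under circular padding and unit stride the map $\mathcal T_{\mathrm{circ}}$ is linear in the kernel. It also satisfies the two rules recalled in \cref{sec:conv-newton-schulz}: $\mathcal T(B\circledast A)=\mathcal T(B)\mathcal T(A)$ and $\mathcal T(K^\top)=\mathcal T(K)^\top$. I will verify these rules in the circular setting, where the growth of the kernel support causes no trouble. One way is to identify $\mathcal T_{\mathrm{circ}}(K)$ with the block-circulant matrix whose blocks are indexed by $\mathbb Z_h\times\mathbb Z_w$. Another is to diagonalise by the 2D DFT: the operator becomes block diagonal with blocks $\widehat K(\omega)\in\mathbb C^{c_{\mathrm{out}}\times c_{\mathrm{in}}}$. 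In that picture, block convolution becomes pointwise multiplication $\widehat B(\omega)\widehat A(\omega)$, and the transposed kernel becomes $\widehat K(\omega)^*$.

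One caveat needs care: the expanded kernel must still fit inside the periodic grid, or else the indices wrap around. Since everything is taken modulo $(h,w)$, wrapping is harmless: the identity holds for the periodised kernel, so the statement remains exact. I will note this explicitly. With these rules in hand, applying $\mathcal T_{\mathrm{circ}}$ to the kernel recursion gives
\[
X^{(t+1)}=(1+\beta)X^{(t)}-\beta X^{(t)}(X^{(t)})^\top X^{(t)}.
\]
This settles the first claim, and the recursion holds for every $t$ by induction.

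For convergence, let $X^{(0)}=U\Sigma V^\top$ be an SVD. Every iterate is an odd matrix polynomial in $X^{(0)}$, namely $p(X)=Xq(X^\top X)$, so each $X^{(t)}$ equals $U f_t(\Sigma)V^\top$ with $f_t=f^{\circ t}$ and
\[
f(\sigma)=(1+\beta)\sigma-\beta\sigma^3.
\]
The problem therefore reduces to the scalar dynamics of $f$ on $\sigma\in[0,1]$.

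The scalar analysis goes as follows. We have $f(\sigma)-\sigma=\beta\sigma(1-\sigma^2)\ge 0$ on $[0,1]$. Also $f'(\sigma)=1+\beta-3\beta\sigma^2$, which on $[0,1]$ is at least $1-2\beta\ge 0$ when $\beta\le 1/2$. Hence $f$ is nondecreasing with $f(1)=1$, and so $f$ maps $[0,1]$ into itself. For any $\sigma_0\in(0,1]$, the sequence $\sigma_t$ is therefore monotone nondecreasing and bounded by $1$. Its limit is a fixed point of $f$ in $(0,1]$, and the only such fixed point is $1$.

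Zero singular values remain at $0$. This is consistent with the polar factor $UV^\top$ taken over the support of the nonzero singular values, which is the standard convention and equals $UV^\top$ in the full-rank (semi-orthogonal) case. Since there are finitely many singular values, $X^{(t)}\to UV^\top=\operatorname{PF}(X^{(0)})$.

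The main obstacle I expect is stating the algebraic identities rigorously in the circular setting, in particular handling the wrap-around when the kernel support exceeds the grid. The DFT block-diagonalisation handles both the composition rule and the transpose rule cleanly, so I would use it first. The endpoint case $\beta=1/2$ is not a real difficulty: there $f'(1)=0$, so convergence is even quadratic near $1$.
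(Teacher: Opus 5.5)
Your proposal is correct and follows the same route as the paper. Both apply $\mathcal T_{\mathrm{circ}}$ to the kernel recursion using linearity, the composition rule and the transpose rule, and then invoke Newton--Schulz convergence; you additionally verify the identities via DFT block-diagonalisation (including wrap-around), carry out the scalar analysis of $f(\sigma)=(1+\beta)\sigma-\beta\sigma^3$ that the paper only cites as the standard Bj\"orck result, and correctly note that a rank-deficient $X^{(0)}$ converges to the partial isometry $U_rV_r^\top$ rather than to a Stiefel point, a caveat the paper's statement glosses over.
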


\begin{proof}
Using the block-convolution identities,
\[
\mathcal T_{\mathrm{circ}}(A\circledast B)=\mathcal T_{\mathrm{circ}}(A)\mathcal T_{\mathrm{circ}}(B),
\qquad
\mathcal T_{\mathrm{circ}}(K^\top)=\mathcal T_{\mathrm{circ}}(K)^\top,
\]
we obtain
\[
\mathcal T_{\mathrm{circ}}\!\bigl(
\widehat K^{(t)} \circledast (\widehat K^{(t)})^\top \circledast \widehat K^{(t)}
\bigr)
=
X^{(t)}(X^{(t)})^\top X^{(t)}.
\]
Applying $\mathcal T_{\mathrm{circ}}$ to the kernel recursion yields the claimed matrix iteration.
The convergence statement is then the standard Bj\"orck / Newton--Schulz result.
\end{proof}

\paragraph{Implicit Newton--Schulz as gradient descent on the orthogonality objective.}
Let $\mathcal M_{\mathrm{kernel}}$ denote the linear space of kernels with fixed spatial support
$k_h\times k_w$, and let $\Pi_{\mathrm{supp}}$ be the orthogonal projection onto that space
(center crop).
In the row-orthogonal case, define
\[
\Phi_{\mathrm{bc}}(K)
:=
\frac14 \|K\circledast K^\top - I\|_F^2,
\qquad K\in \mathcal M_{\mathrm{kernel}}.
\]
The column-orthogonal case is identical after replacing $K\circledast K^\top-I$ by
$K^\top\circledast K-I$.

\begin{proposition}[Gradient formula]
\label{prop:gradient-formula}
For every $K\in \mathcal M_{\mathrm{kernel}}$,
\[
\nabla \Phi_{\mathrm{bc}}(K)
=
\Pi_{\mathrm{supp}}\!\bigl((K\circledast K^\top-I)\circledast K\bigr)
=
\Pi_{\mathrm{supp}}\!\bigl(K\circledast K^\top\circledast K - K\bigr).
\]
Consequently,
\[
\Pi_{\mathrm{supp}}\!\Bigl((1+\beta)K-\beta K\circledast K^\top\circledast K\Bigr)
=
K-\beta \nabla \Phi_{\mathrm{bc}}(K).
\]
\end{proposition}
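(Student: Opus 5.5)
The plan is to compute the directional derivative of $\Phi_{\mathrm{bc}}$ in the Frobenius geometry of $\mathcal M_{\mathrm{kernel}}$. Then I identify the resulting linear functional with an inner product against a kernel, using the adjoint structure of block convolution.

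Fix $K,H\in\mathcal M_{\mathrm{kernel}}$ and set $R:=K\circledast K^\top-I$, a kernel of support $(2k_h-1)\times(2k_w-1)$. By bilinearity of $\circledast$ and linearity of transposition,
\[
(K+\epsilon H)\circledast(K+\epsilon H)^\top - I = R+\epsilon\bigl(H\circledast K^\top + K\circledast H^\top\bigr)+O(\epsilon^2).
\]
Differentiating therefore gives $D\Phi_{\mathrm{bc}}(K)[H]=\tfrac12\langle R,\,H\circledast K^\top+K\circledast H^\top\rangle_F$. Here the inner product is the Frobenius inner product on kernels, with all kernels zero-padded to a common support.

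The key step is an adjoint identity for block convolution on kernels:
\[
\langle A\circledast B, C\rangle_F=\langle A, C\circledast B^\top\rangle_F=\langle B, A^\top\circledast C\rangle_F.
\]
I would prove it directly from the index formula. Write $(A\circledast B)_{o,i}[u]=\sum_{m}\sum_{v}A_{o,m}[v]\,B_{m,i}[u-v]$, and write $(B^\top)_{m,i}[w]=B_{i,m}[-w]$ (transpose swaps channels and flips space, which is what makes $\mathcal T(K^\top)=\mathcal T(K)^\top$). Reindexing the finite sum then gives the identity.

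Alternatively, under circular padding one can transport everything through $\mathcal T_{\mathrm{circ}}$. The map $\mathcal T_{\mathrm{circ}}$ satisfies $\langle\mathcal T(A),\mathcal T(B)\rangle_F=hw\langle A,B\rangle_F$ as long as supports do not wrap around, so it is an isometry up to the scalar $hw$. The matrix identity $\langle XY,Z\rangle=\langle X,ZY^\top\rangle$ then gives the claim.

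I expect this bookkeeping to be the main obstacle: the transpose/flip convention, and the centering convention of supports, so that the crop $\Pi_{\mathrm{supp}}$ really is the orthogonal projection onto the zero-padded subspace.

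Applying the identity, and using that $R^\top=R$ because $(K\circledast K^\top)^\top=K\circledast K^\top$ and $I^\top=I$, I get
\[
\langle R,H\circledast K^\top\rangle=\langle H,R\circledast K\rangle,\qquad \langle R,K\circledast H^\top\rangle=\langle R^\top,H\circledast K^\top\rangle=\langle H,R\circledast K\rangle.
\]
Hence $D\Phi_{\mathrm{bc}}(K)[H]=\langle H,\,R\circledast K\rangle_F$ for every $H\in\mathcal M_{\mathrm{kernel}}$. Since $H$ ranges only over the fixed support, the Riesz representative inside $\mathcal M_{\mathrm{kernel}}$ is the orthogonal projection $\Pi_{\mathrm{supp}}(R\circledast K)$.

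Next, distributivity gives $R\circledast K=K\circledast K^\top\circledast K-I\circledast K$, and $I\circledast K=K$ because $I$ is the identity impulse kernel. This proves the second form of the gradient. Finally, $K\in\mathcal M_{\mathrm{kernel}}$, so $\Pi_{\mathrm{supp}}K=K$. By linearity of $\Pi_{\mathrm{supp}}$,
\[
\Pi_{\mathrm{supp}}\bigl((1+\beta)K-\beta K\circledast K^\top\circledast K\bigr)=K-\beta\,\Pi_{\mathrm{supp}}(K\circledast K^\top\circledast K-K)=K-\beta\nabla\Phi_{\mathrm{bc}}(K).
\]
The column-orthogonal case follows verbatim with $R'=K^\top\circledast K-I$.
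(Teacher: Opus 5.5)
Your proposal is correct and follows essentially the same route as the paper. Both arguments differentiate $\tfrac14\|K\circledast K^\top-I\|_F^2$, move $K^\top$ across the inner product with the block-convolution adjoint identity, use the self-transposeness of $R$ to merge the two terms, and take $\Pi_{\mathrm{supp}}(R\circledast K)$ as the Riesz representative in $\mathcal M_{\mathrm{kernel}}$; the only difference is that you actually justify the adjoint identity and its support-centering bookkeeping, which the paper simply asserts.
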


\begin{proof}
Let $H\in \mathcal M_{\mathrm{kernel}}$.
Since
\[
D(K\circledast K^\top)[H]
=
H\circledast K^\top + K\circledast H^\top,
\]
we have
\[
D\Phi_{\mathrm{bc}}(K)[H]
=
\frac12
\left\langle
K\circledast K^\top - I,\;
H\circledast K^\top + K\circledast H^\top
\right\rangle.
\]
Now use that the adjoint of convolution by $K^\top$ is convolution by $K$ followed by
the center crop onto the support of $H$:
\[
\langle H\circledast K^\top, Z\rangle
=
\langle H,\Pi_{\mathrm{supp}}(Z\circledast K)\rangle.
\]
Since $K\circledast K^\top-I$ is self-transpose, the two terms coincide, hence
\[
D\Phi_{\mathrm{bc}}(K)[H]
=
\left\langle
\Pi_{\mathrm{supp}}\!\bigl((K\circledast K^\top-I)\circledast K\bigr),
H
\right\rangle.
\]
This identifies the gradient.
The final identity follows because $\Pi_{\mathrm{supp}}(K)=K$.
\end{proof}

\begin{remark}[Why $\mathcal{L}_{\mathrm{orth}}$ is the right strided objective]
For stride $S>1$, exact Newton--Schulz on the actual layer operator must be written for
\[
A_S(K):=S_S\mathcal T(K),
\]
and its cubic term is
\[
A_S(K)A_S(K)^\top A_S(K)
=
S_S\mathcal T(K)\mathcal T(K)^\top S_S^\top S_S\mathcal T(K),
\]
which is not the same as $S_S\mathcal T(K\circledast K^\top\circledast K)$.
Therefore, in the strided case, the correct fixed-support objective is not the plain
unstrided block-convolution residual but the stride-aware regularizer $\mathcal{L}_{\mathrm{orth}}$
of \citep{achour2022existence}, whose gradient is implemented in practice by a standard
convolution followed by a transposed convolution.
The formula above is recovered exactly when $S=1$.
\end{remark}

\paragraph{Projected descent on the support-constrained $\mathcal{L}_{\mathrm{orth}}$ objective.}
In the row-orthogonal case define
\[
\Phi_S(K)
:=
\frac14\,\mathcal{L}_{\mathrm{orth}}(K)
=
\frac14\left\|
\mathrm{conv}(K,K,\mathrm{padding}=P,\mathrm{stride}=S)-I_{r0}
\right\|_F^2,
\]
with $P=\lfloor (k-1)/S\rfloor S$.
The practical update can then be written abstractly as
\[
K^{(t+1)}
=
K^{(t)}-\beta_t \nabla \Phi_S(K^{(t)}),
\]
or equivalently, in implementation form,
\[
K^{(t+1)}
=
\Pi_{\mathrm{supp}}\!\Bigl(K^{(t)}-\beta_t G(K^{(t)})\Bigr),
\]
where $G(K^{(t)})$ is obtained by one direct convolution and one transposed convolution.
When $S=1$, this reduces to the block-convolution formula above.

\begin{theorem}[Convergence to a stationary point]
Assume that the iterates remain in a bounded set $\mathcal D\subset \mathcal M_{\mathrm{kernel}}$.
Since $\Phi_S$ is a polynomial on the finite-dimensional space $\mathcal M_{\mathrm{kernel}}$,
its gradient is Lipschitz on $\mathcal D$; let $L$ be such a Lipschitz constant.
If the step sizes satisfy
\[
0<\underline\beta \le \beta_t \le \overline\beta < \frac{2}{L},
\]
then
\[
\Phi_S(K^{(t+1)})
\le
\Phi_S(K^{(t)})
-
c\|\nabla \Phi_S(K^{(t)})\|_F^2,
\qquad
c:=\underline\beta\Bigl(1-\frac{L\overline\beta}{2}\Bigr)>0.
\]
Consequently,
\[
\sum_{t=0}^{\infty}\|\nabla \Phi_S(K^{(t)})\|_F^2 < \infty,
\qquad
\nabla \Phi_S(K^{(t)})\to 0,
\]
and every accumulation point of $(K^{(t)})_{t\ge 0}$ is a critical point of the support-constrained problem
\[
\min_{K\in \mathcal M_{\mathrm{kernel}}} \Phi_S(K).
\]
Moreover, because $\Phi_S$ is a polynomial (hence a Kurdyka--\L{}ojasiewicz function),
the whole sequence $(K^{(t)})_{t\ge 0}$ converges to a single critical point
$K^\star$ of $\Phi_S$.
\end{theorem}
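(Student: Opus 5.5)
The plan is the standard descent-lemma argument for gradient descent with a locally Lipschitz gradient, followed by a Kurdyka--\L{}ojasiewicz (KL) argument for convergence of the whole sequence. Since $\Phi_S$ is a polynomial on the finite-dimensional space $\mathcal M_{\mathrm{kernel}}$, its gradient is a polynomial map, hence Lipschitz on any bounded set. Lipschitz continuity of $\nabla\Phi_S$ only gives the quadratic upper bound on convex sets, so I will replace $\mathcal D$ by its closed convex hull $\overline{\mathrm{conv}}(\mathcal D)$, which is still bounded. I will take $L$ to be a Lipschitz constant of $\nabla\Phi_S$ there. On this convex set the descent lemma holds for every segment between two points:
\[
\Phi_S(Y)\le \Phi_S(X)+\langle\nabla\Phi_S(X),Y-X\rangle+\tfrac L2\|Y-X\|_F^2 .
\]

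Next I apply the descent lemma with $X=K^{(t)}$ and $Y=K^{(t+1)}=K^{(t)}-\beta_t\nabla\Phi_S(K^{(t)})$. This is legitimate because both iterates lie in $\mathcal D$ by assumption, so the segment between them lies in the convex hull. The bound becomes
\[
\Phi_S(K^{(t+1)})\le\Phi_S(K^{(t)})-\beta_t\bigl(1-\tfrac{L\beta_t}{2}\bigr)\|\nabla\Phi_S(K^{(t)})\|_F^2 .
\]
Using $\underline\beta\le\beta_t\le\overline\beta<2/L$, the coefficient is at least $c$, which gives the claimed sufficient decrease.

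Since $\Phi_S\ge 0$, telescoping gives $c\sum_t\|\nabla\Phi_S(K^{(t)})\|_F^2\le\Phi_S(K^{(0)})<\infty$, so $\nabla\Phi_S(K^{(t)})\to0$. For an accumulation point $\bar K$, boundedness of $\mathcal D$ guarantees a convergent subsequence, and continuity of $\nabla\Phi_S$ then gives $\nabla\Phi_S(\bar K)=0$. Since $\mathcal M_{\mathrm{kernel}}$ is a linear space, this is exactly criticality for the support-constrained problem. The identification of the implementation form with a gradient step is taken from Proposition~\ref{prop:gradient-formula} in the unit-stride case and from the remark on stride-aware $\mathcal{L}_{\mathrm{orth}}$ in general, so nothing more is needed there.

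The main obstacle is convergence of the whole sequence, which I will get from the abstract framework of Attouch--Bolte--Svaiter. Polynomials are semialgebraic and therefore satisfy the KL inequality at every point. The framework needs two conditions:
\begin{itemize}
\item[(H1)] sufficient decrease, $\Phi_S(K^{(t+1)})+a\|K^{(t+1)}-K^{(t)}\|^2\le\Phi_S(K^{(t)})$, which follows from the bound above since $\|K^{(t+1)}-K^{(t)}\|=\beta_t\|\nabla\Phi_S(K^{(t)})\|$, taking $a=c/\overline\beta^2$;
\item[(H2)] a relative error bound, $\|\nabla\Phi_S(K^{(t+1)})\|\le b\|K^{(t+1)}-K^{(t)}\|$, obtained from
\[
\|\nabla\Phi_S(K^{(t+1)})\|\le\|\nabla\Phi_S(K^{(t)})\|+L\|K^{(t+1)}-K^{(t)}\|\le\bigl(\underline\beta^{-1}+L\bigr)\|K^{(t+1)}-K^{(t)}\|,
\]
so $b=\underline\beta^{-1}+L$.
\end{itemize}
Continuity of $\Phi_S$ and the existence of an accumulation point complete the hypotheses. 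The KL argument then shows that $\sum_t\|K^{(t+1)}-K^{(t)}\|<\infty$: one uses the uniformized KL property on the compact set of accumulation points, on which $\Phi_S$ is constant because $\Phi_S(K^{(t)})$ is monotone and bounded below. The sequence is therefore Cauchy and converges to a single critical point $K^\star$.
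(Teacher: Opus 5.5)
Your proposal is correct and follows the same route as the paper: the descent lemma gives sufficient decrease with constant $c$, telescoping gives summable squared gradients, continuity makes accumulation points critical, and the KL property of the polynomial $\Phi_S$ gives convergence of the whole sequence. You are more careful than the paper in two places: you take $L$ on the convex hull so that the descent lemma applies along each segment, and you verify the Attouch--Bolte--Svaiter sufficient-decrease and relative-error conditions explicitly, whereas the paper only asserts these steps.
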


\begin{proof}
By the descent lemma for $L$-smooth functions,
\[
\Phi_S(K-\beta\nabla \Phi_S(K))
\le
\Phi_S(K)-\beta\|\nabla \Phi_S(K)\|_F^2
+\frac{L\beta^2}{2}\|\nabla \Phi_S(K)\|_F^2.
\]
Applying this with $K=K^{(t)}$ and $\beta=\beta_t$ gives
\[
\Phi_S(K^{(t+1)})
\le
\Phi_S(K^{(t)})
-
\beta_t\Bigl(1-\frac{L\beta_t}{2}\Bigr)\|\nabla \Phi_S(K^{(t)})\|_F^2,
\]
hence the announced inequality.
Summing over $t$ proves that $\sum_t \|\nabla \Phi_S(K^{(t)})\|_F^2<\infty$, and therefore
$\nabla \Phi_S(K^{(t)})\to 0$.
Criticality of accumulation points follows by continuity of $\nabla \Phi_S$.
Finally, the KL property of polynomial functions upgrades subsequential convergence to convergence
of the whole bounded sequence.
\end{proof}

\begin{corollary}[Operator-level consequence]
Under circular boundary conditions and $SN\ge 2k-1$, the results summarized in the
\textsc{Lorth} analysis imply that $\mathcal{L}_{\mathrm{orth}}$ controls both the Frobenius and spectral
orthogonality errors of the induced convolution operator.
Therefore, if $\Phi_S(K^{(t)})\to 0$, then the operator-level orthogonality residuals vanish.
More generally, if the terminal value $\Phi_S(K^{(T)})$ is small, then the spectral residual
is small, and the induced convolution is approximately isometric.
Thus the projected convolutional Newton--Schulz iteration should be viewed as a
support-constrained orthogonality solver: when an orthogonal kernel exists, it can converge
to such a solution; otherwise it converges to the best stationary approximation inside the
fixed-support kernel class.
\end{corollary}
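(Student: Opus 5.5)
The plan is to chain the convergence theorem with the quantitative \textsc{Lorth} bounds of \citet{achour2022existence}, and then read off the three conclusions of the corollary. First, recall from \cref{eq:lorth-row} that under circular padding and $SN \ge 2k-1$ we have the exact identity $\mathcal{L}_{\mathrm{orth}}(K) = \frac{1}{h_{\mathrm{out}}w_{\mathrm{out}}}\|\mathcal{T}(K)\mathcal{T}(K)^\top - I\|_F^2$ in the row-orthogonal case. The size condition $SN\ge 2k-1$ is what makes the self-convolution kernel $K\circledast K^\top$ fit inside the $N\times N$ circular grid without wrap-around aliasing. Hence $4\Phi_S(K)$ is, up to the fixed factor $h_{\mathrm{out}}w_{\mathrm{out}}$, the squared Frobenius residual of the operator. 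This immediately gives Frobenius control:
\[
\|\mathcal{T}(K)\mathcal{T}(K)^\top-I\|_F^2 = 4\,h_{\mathrm{out}}w_{\mathrm{out}}\,\Phi_S(K).
\]

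Second, obtain spectral control from the elementary inequality $\|M\|_{\mathrm{op}}\le\|M\|_F$ applied to $M=\mathcal{T}(K)\mathcal{T}(K)^\top-I$. This yields
\[
\max_i|\sigma_i(\mathcal{T}(K))^2-1|\le 2\sqrt{h_{\mathrm{out}}w_{\mathrm{out}}\,\Phi_S(K)},
\]
so every singular value satisfies $\sigma_i\in[\sqrt{1-\varepsilon},\sqrt{1+\varepsilon}]$ with $\varepsilon=2\sqrt{h_{\mathrm{out}}w_{\mathrm{out}}\Phi_S(K)}$. This is approximate isometry: $\bigl|\|\mathcal{T}(K)^\top x\|^2-\|x\|^2\bigr|\le\varepsilon\|x\|^2$. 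A sharper, grid-independent spectral bound is available via the block-circulant Fourier diagonalization: $\mathcal{T}(K)\mathcal{T}(K)^\top-I$ is unitarily similar to a block-diagonal matrix of per-frequency residuals. Its operator norm is bounded by the $\ell^1$ norm over spatial taps of the residual kernel, and Cauchy--Schwarz over the $(2k-1)^2$ support positions then gives $\|\cdot\|_{\mathrm{op}}\le(2k-1)\sqrt{4\Phi_S(K)}$. I would present this version, since it matches the ``\textsc{Lorth} analysis'' alluded to.

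Third, combine these bounds with the convergence theorem. The theorem gives $K^{(t)}\to K^\star$ with $\nabla\Phi_S(K^\star)=0$, and $\Phi_S$ is continuous. Therefore, if $\Phi_S(K^{(t)})\to0$, the bounds above force both residuals to $0$, and by continuity $\mathcal{T}(K^\star)$ is exactly semi-orthogonal. The ``small terminal value'' statement is the same bound evaluated at $t=T$. For the dichotomy, note two cases. If an orthogonal kernel exists, then the global minimum value of $\Phi_S$ is $0$, and a limit with $\Phi_S(K^\star)=0$ is such a kernel. Otherwise, $K^\star$ is a stationary point of $\Phi_S$ on $\mathcal{M}_{\mathrm{kernel}}$, which is exactly ``best stationary approximation.''

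The main obstacle is precisely this case split. Descent plus the KL property only guarantees a critical point, not a global minimizer, so the claim ``it can converge to such a solution'' must be phrased as possibility rather than certainty. One option is to argue that for generic initialization, gradient descent avoids strict saddles (the stable-manifold argument of Lee et al.). A second subtlety is the strided case, where the identity in \cref{eq:lorth-row} must be the stride-aware one with $P=\lfloor(k-1)/S\rfloor S$. There I would invoke \citet{achour2022existence} directly rather than re-derive it.
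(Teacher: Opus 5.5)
Your proposal is correct and follows the paper's route. The paper gives no separate derivation: it cites the exact Frobenius identity and the grid-independent spectral bound of \citet{achour2022existence} and combines them with the convergence theorem. Your Fourier block-diagonalization plus Cauchy--Schwarz argument is a valid self-contained proof of that spectral bound. It also extends directly to $S>1$, because $S_S\mathcal{T}(K)\mathcal{T}(K)^\top S_S^\top$ is a circular convolution on the output grid with the subsampled self-convolution kernel; you then get $2\lfloor (k-1)/S\rfloor+1$ in place of $2k-1$, so you need not defer to the citation there. You are also right that descent plus the KL property only gives a critical point, so the clause about reaching an orthogonal kernel holds only as a possibility, which matches the paper's ``can converge''.
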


\paragraph{Initial conditions.}
To guarantee the convergence of the Newton--Schulz iteration, the initial operator norm of the convolution must be strictly bounded, satisfying $\|\mathcal{T}(K)\|_2 \le 1.0$.
Although a simple Frobenius normalization of the kernel tensor is sufficient to achieve this, employing a tighter spectral estimate accelerates convergence.
The accurate evaluation of the Lipschitz constant for convolutional layers is an active area of research in its own right: initial approaches leveraged the Fast Fourier Transform (FFT) to compute a lower bound for layers with circular padding~\citep{sedghi_singular_2019}.
This foundation was subsequently refined by methods offering improved computational efficiency~\citep{senderovich_towards_2022}, support for diverse padding strategies~\citep{grishina_tight_2024}, and the ability to extract broader regions of the operator spectrum~\citep{boroojeny_spectrum_2024}.
In our implementation, we rely on the techniques established by \citet{delattre2023efficient} and \citet{delattre2024spectral}, which allow us to efficiently compute a certifiable upper bound on the spectral norm across various padding modes.

Finally, while the target implies that $\|\mathcal{T}(K)\|_2 \le 1.0$, the projection step, coupled with numerical imprecision, can yield an iterate with $\|\mathcal{T}(K^{(t)})\|_2 \ge 1.0$.
To avoid this, we add a damping factor; when $\beta = 0.5$, a damping factor of $0.9$ yields good results.

\paragraph{Convergence guarantees:} 
Without the support projection, the block-convolution Newton-Schulz iteration is \emph{exactly} the classical Bj\"orck / Newton--Schulz iteration on the induced operator $\mathcal{T}
(\Delta_K)$, and therefore converges to its polar factor whenever $\|\mathcal{T}(\Delta_K)\|_2 \le 1$ and $\beta \le 0.5$.
Once the projection $\Pi_{\mathrm{supp}}$ is inserted, the method is no longer exact Newton--Schulz on the full operator; instead, it becomes projected gradient descent on a support-constrained orthogonality objective $\Phi_S=\|
\mathcal{T}(\Delta)
\mathcal{T}(\Delta)^T-I\|^2$ that coincides with the $\mathcal{L}_{\mathrm{orth}}$ surrogate of \citet{achour2022existence} in the strided setting (\cref{prop:gradient-formula}).
Because $\Phi_S$ is a polynomial on the finite-dimensional kernel space, its gradient is Lipschitz and it satisfies the Kurdyka--\L{}ojasiewicz property: the iterates are guaranteed to converge to a single critical point of $\Phi_S$, with $\sum_t \|\nabla \Phi_S(\Delta_K^{(t)})\|_F^2 < \infty$ \citep{bolte2010characterizations}.
In particular, when an orthogonal kernel exists within the fixed-support class, the method can converge to it; otherwise it converges to the best stationary approximation.
The initial condition $\|\mathcal{T}(\Delta_K)\|_2 \le 1$ is enforced via a spectral-norm upper bound computed efficiently following \citet{delattre2023efficient,delattre2024spectral}, and a damping factor handles residual numerical drift across iterations.

\section{Correction of the grafting factor:} 
\label{app:grabbing}
Typical implementations of Muon multiply the orthogonal update by a scalar factor based on the matrix dimensions. While there is no strict consensus, these grafting factors allow balancing learning between layers and help with learning rate transferability. Following the logic of this paper, these factors should be computed according to the dimensions of the induced operator $\mathcal{T}(\Delta_K)$ rather than the dimensions of the reshaped kernel $K$. \Cref{tab:scaling-factors} presents different scaling conventions and their adaptation. Note that \citet{bernstein_modular_2025} uses a different logic than the other two by proposing distinct formulas for dense and convolutional layers, where the dense formula does not rely on the kernel sizes $k_w$ and $k_h$.

\begin{table}[t]
    \centering
    \small
    \begin{tabular}{llll}
        \hline
        & \citet{jordan_muon_2024}
        & \citet{liu_muon_2025}
        & \citet{bernstein_modular_2025} \\
       \hline

        Dense matrix 
        & $\sqrt{\max\left(1,\frac{n}{m}\right)}$
        & $0.2\sqrt{\max(n,m)}$
        & $\sqrt{\frac{n}{m}}$ \\
        ($n \times m$) & & & \\

        Reshaped 
        & $\sqrt{\max\left(1,\frac{c_{\mathrm{out}}}
        {c_{\mathrm{in}}k_wk_h}\right)}$
        & $0.2\sqrt{\max\left(c_{\mathrm{out}},
        c_{\mathrm{in}}k_wk_h\right)}$
        & $\frac{1}{k_wk_h}
        \sqrt{\frac{c_{\mathrm{out}}}{c_{\mathrm{in}}}}$ \\
        kernel & & &\\
        Adaptation
        & $\sqrt{\max\left(1,
        \frac{c_{\mathrm{out}}}{c_{\mathrm{in}}s^2}\right)}$
        & $0.2\sqrt{\max\left(
        c_{\mathrm{in}}i_wi_h,
        c_{\mathrm{out}}\frac{i_w}{s}\frac{i_h}{s}
        \right)}$
        & $\sqrt{\frac{c_{\mathrm{out}}}
        {c_{\mathrm{in}}s^2}}$ \\
        for $\mathcal{T}(\Delta_K)$ & & &\\
        \hline
    \end{tabular}

    \caption{Grafting factors for Muon updates. The adaptation for
    $\mathcal{T}(\Delta_K)$ applies the dense-matrix formula using the
    dimensions of the induced convolution operator,
    $n=c_{\mathrm{out}}\frac{i_h}{s}\frac{i_w}{s}$ and
    $m=c_{\mathrm{in}}i_hi_w$.}
    \label{tab:scaling-factors}
\end{table}
\section{CIFAR-10 and ImageNet classification details}
\label{sec:appendix-classification-details}

\paragraph{CIFAR-10 AirBench94.}
The CIFAR-10 experiment uses the AirBench94 benchmark~\citep{jordan_airbench_2024}, a speed-optimized 8-epoch training recipe designed to reach 94\% accuracy in only a few seconds on a single GPU.
The network is the compact AirBench CNN with three convolutional groups (64$\to$256$\to$256 channels) built from $3{\times}3$ stride-1 convolutions.
Runs use batch size 2000, 8 training epochs, 3 whitening epochs, horizontal-flip augmentation, and translation augmentation with padding 2.
Conv-NS and reshape-based Muon use the same architecture, training schedule, augmentation pipeline, batch size, and non-convolutional optimizer groups; only the convolutional orthogonalization routine inside Muon differs.
The SGD baseline replaces Muon entirely.

For reshape-based Muon and SGD, we use the original AirBench author's benchmark-specific optimized settings.
Because no reference Conv-NS setting existed for this benchmark, we tune Conv-NS with a 1000-configuration search budget.
The selected Conv-NS configuration uses Muon learning rate $0.52$, Muon momentum $0.6$, zero Muon weight decay, bias learning rate $0.053$, head learning rate $0.67$, Adam weight-decay scale $6{\times}10^{-6}$, and SGD momentum $0.85$ for the remaining parameter groups.
For the convolutional orthogonalization itself, we use $\beta_{\mathrm{init}}=0.25$, $\beta_{\mathrm{end}}=0.5$, 25 Conv-NS iterations, damping $0.99$, $\gamma=0.01$, and $\epsilon=0.01$.

Conv-NS, reshape-based Muon, and SGD are each reported over 25 independent runs.
We report both ordinary validation accuracy and test-time-augmented validation accuracy.
On a single RTX4090, the non-optimized PyTorch implementation of Conv-NS runs in $4.45$ seconds per run, compared with $3.68$ seconds for reshape-based. Runtime with torch compilation enabled are reported in main paper.

\paragraph{ImageNet-1k ResNet-50.}
The ImageNet experiment uses ImageNet-1k~\citep{deng2009imagenet} with a ResNet-50 backbone and a \texttt{timm}-style training recipe shortened to 30 epochs.
Training uses batch size 512 across 4 H100 GPUs (so global batch size is 2048), a linear learning-rate schedule with no warmup, TA-Wide augmentation, mixup ($\alpha{=}0.1$), and CutMix ($\alpha{=}1.0$).
We compare the \texttt{flatten} strategy (reshape-based Muon) against the \texttt{bjorck} strategy (Conv-NS).

Both strategies receive the same grid-search budget over learning rate $\in \{0.01, 0.02, 0.05, 0.1\}$, momentum $\in \{0.8, 0.9, 0.95\}$, weight decay $\in \{2{\times}10^{-5}, 10^{-4}\}$, and cautious weight decay $\in \{\text{true}, \text{false}\}$.
This gives 48 configurations per strategy and 96 runs in total.
Both strategies reach their best validation top-1 accuracy at lr$=0.02$, momentum$=0.95$, weight decay$=2{\times}10^{-5}$, with cautious weight decay enabled.

\section{Per-stage Pareto analysis on real training gradients}
\label{sec:appendix-per-stage-pareto}

\Cref{subsec:why-reshape-works} reports the orthogonality--fidelity Pareto front averaged over three training stages.
\Cref{fig:grad-pareto-per-stage} shows the same analysis separately for gradient snapshots captured at the beginning, middle, and end of CIFAR-10 training.
The key observation---that reshape-based Muon (RKO) lies on or very near the Pareto front for real gradients---holds consistently across all three stages, confirming that this is not an artifact of a particular training phase.

\begin{figure}[h]
    \centering
    \includegraphics[width=0.96\textwidth]{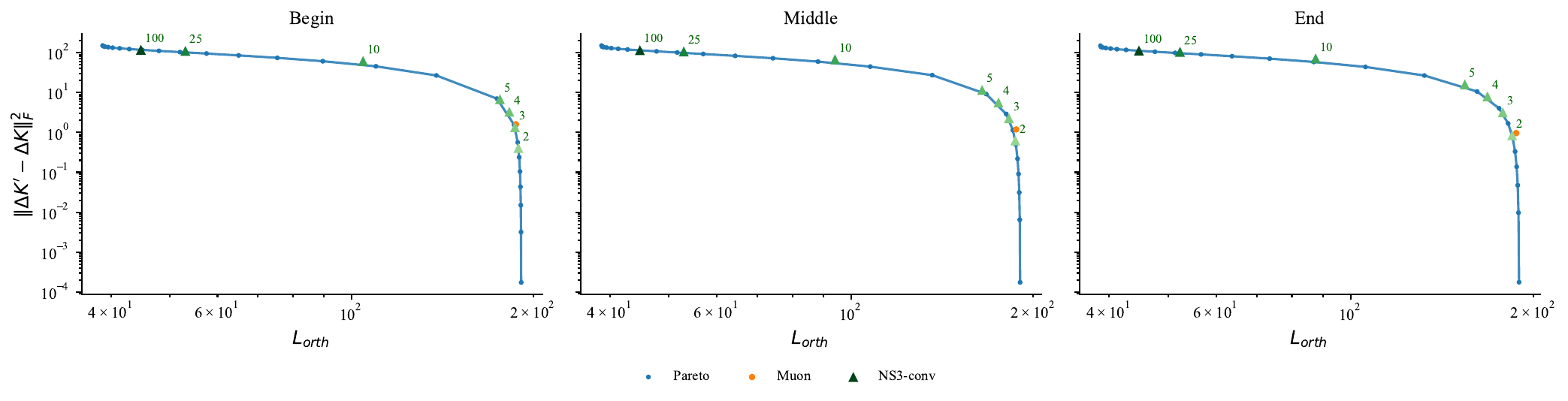}
    \caption{\textbf{Per-stage Pareto trade-off on real training gradients.} Each panel shows the Pareto front computed on gradient snapshots from a single training stage (begin, middle, end). Reshape-based Muon (RKO) lies on the Pareto front at all stages, and Conv-NS traces the front as the iteration count increases.}
    \label{fig:grad-pareto-per-stage}
\end{figure}

\section{Hierarchical optimization of orthogonality objectives}
\label{sec:appendix-hierarchical}

This section provides the full numerical results comparing the orthogonality residuals of different operators, as discussed in the main text. \Cref{tab:rko-path-synthetic} presents the results for synthetic, randomly initialized kernels, while \Cref{tab:rko-path-real} shows the results for real gradient snapshots captured during training.

On synthetic kernels, \cref{tab:rko-path-synthetic} illustrates an asymmetric relationship between the two orthogonality objectives. RKO nearly eliminates the flattened-kernel residual ($\mathcal{L}_{\mathrm{flat}}=0.02$), but leaves substantial convolutional error ($\mathcal{L}_{\mathrm{orth}}=14.40$). Rescaling RKO to unit operator norm reduces this error to $9.13$ while sacrificing flattened orthogonality, showing that spectral normalization alone does not resolve the mismatch. In contrast, ConvPF achieves low residuals on both objectives ($0.10$ and $0.02$, respectively), while ten Conv-NS iterations yield intermediate values ($3.38$ and $1.11$). These results show that, in this setting, convolutional orthogonalization can also satisfy the flattened-kernel objective, whereas optimizing the latter alone is insufficient to ensure convolutional orthogonality.

\begin{table}[h]
    \centering
    \small
    \begin{tabular}{lrrrr}
\hline
Node & Trials & Lipschitz & $L_{\mathrm{orth}}$ & $L_{\mathrm{flat}}$ \\
\hline
K & 10 & $2.31 \pm 0.07$ & $36.57 \pm 1.53$ & $7.11 \pm 0.69$ \\
$RKO(\Delta K)$ & 10 & $1.83 \pm 0.03$ & $14.40 \pm 0.40$ & $0.02 \pm 0.00$ \\
$RKO(\Delta K) / \sigma(RKO(\Delta K))$ & 10 & $1.00 \pm 0.00$ & $9.13 \pm 0.14$ & $7.85 \pm 0.20$ \\
$ConvPF(\Delta K)$ & 10 & $1.07 \pm 0.02$ & $0.10 \pm 0.03$ & $0.02 \pm 0.00$ \\
$ConvNS_{10}(\Delta K)$ & 10 & $1.14 \pm 0.01$ & $3.38 \pm 0.09$ & $1.11 \pm 0.04$ \\
\hline
\end{tabular}

    \vspace{0.5em}
    \caption{\textbf{Orthogonality hierarchy on synthetic data.} Highlighting the difference between the convolutional polar factor and the reshape-based orthogonalization (RKO). By measuring both the operator-level orthogonality residual $\mathcal{L}_{\mathrm{orth}}$ (Polar factor objective) and the flattened-kernel orthogonality residual $\mathcal{L}_{\mathrm{flat}}$ (RKO objective) we can observe a hierarchy between the two: the convolutional polar factor achieves both operator-level and flattened orthogonality, while RKO only satisfies the latter.}
    \label{tab:rko-path-synthetic}
\end{table}

On real training gradients, \cref{tab:rko-path-real} confirms the mismatch between flattened and convolutional orthogonality across all three training stages: RKO maintains a low flattened-kernel residual, yet its operator norm remains close to $3$, compared with $1.83$ in the synthetic setting. ConvPF consistently achieves near-zero convolutional error, while ten Conv-NS iterations provide an intermediate level of orthogonalization. However, the relationship between the two objectives is less clear-cut than on synthetic kernels: ConvPF yields $\mathcal{L}_{\mathrm{flat}}\approx29$, substantially above RKO's $0.22$--$0.51$. Thus, although the convolutional-orthogonality ordering persists throughout training, these measurements do not establish that ConvPF also minimizes the flattened-kernel residual on real gradients.

\begin{table}[t]
    \centering
    \small
    \begin{tabular}{llrrrr}
\hline
Stage & Node & Trials & Lipschitz & $L_{\mathrm{orth}}$ & $L_{\mathrm{flat}}$ \\
\hline
Begin & $\Delta K$ & 6 & $137.34 \pm 47.94$ & 5.2e+07 & 1.1e+07 \\
Begin & $\Delta K / \sigma(\Delta K)$ & 6 & $1.00 \pm 0.00$ & $152.45 \pm 119.32$ & $191.04 \pm 104.36$ \\
Begin & $RKO(\Delta K)$ & 6 & $2.99 \pm 0.04$ & $461.04 \pm 304.04$ & $0.51 \pm 0.78$ \\
Begin & $RKO(\Delta K) / \sigma(RKO(\Delta K))$ & 6 & $1.00 \pm 0.00$ & $118.74 \pm 103.28$ & $151.18 \pm 81.79$ \\
Begin & $ConvPF(\Delta K)$ & 6 & $1.08 \pm 0.07$ & $0.09 \pm 0.08$ & $29.26 \pm 62.44$ \\
Begin & $ConvNS_{10}(\Delta K)$ & 6 & $1.28 \pm 0.09$ & $108.20 \pm 101.75$ & $137.14 \pm 87.38$ \\
Middle & $\Delta K$ & 6 & $119.15 \pm 47.63$ & 4.7e+07 & 1.2e+07 \\
Middle & $\Delta K / \sigma(\Delta K)$ & 6 & $1.00 \pm 0.00$ & $150.68 \pm 117.47$ & $189.18 \pm 102.69$ \\
Middle & $RKO(\Delta K)$ & 6 & $2.94 \pm 0.12$ & $471.96 \pm 287.20$ & $0.22 \pm 0.11$ \\
Middle & $RKO(\Delta K) / \sigma(RKO(\Delta K))$ & 6 & $1.00 \pm 0.00$ & $117.73 \pm 101.64$ & $149.60 \pm 80.70$ \\
Middle & $ConvPF(\Delta K)$ & 6 & $1.05 \pm 0.07$ & $0.04 \pm 0.07$ & $29.25 \pm 62.47$ \\
Middle & $ConvNS_{10}(\Delta K)$ & 6 & $1.26 \pm 0.10$ & $71.22 \pm 61.30$ & $87.57 \pm 58.03$ \\
End & $\Delta K$ & 6 & $126.52 \pm 50.66$ & 6.0e+07 & 1.5e+07 \\
End & $\Delta K / \sigma(\Delta K)$ & 6 & $1.00 \pm 0.00$ & $150.98 \pm 117.89$ & $189.51 \pm 103.03$ \\
End & $RKO(\Delta K)$ & 6 & $2.95 \pm 0.11$ & $457.84 \pm 281.82$ & $0.22 \pm 0.12$ \\
End & $RKO(\Delta K) / \sigma(RKO(\Delta K))$ & 6 & $1.00 \pm 0.00$ & $117.51 \pm 101.34$ & $149.55 \pm 80.27$ \\
End & $ConvPF(\Delta K)$ & 6 & $1.04 \pm 0.05$ & $0.08 \pm 0.07$ & $29.25 \pm 62.45$ \\
End & $ConvNS_{10}(\Delta K)$ & 6 & $1.26 \pm 0.10$ & $72.59 \pm 64.10$ & $89.26 \pm 59.25$ \\
\hline
\end{tabular}

    \vspace{0.5em}
    \caption{\textbf{Orthogonality hierarchy on real gradients.} The same comparison as in \cref{tab:rko-path-synthetic}, evaluated on real gradient snapshots from a CIFAR-10 training run across different stages (begin, middle, end). The hierarchy between the objectives remains consistent with the synthetic setting.}
    \label{tab:rko-path-real}
\end{table}

\section{Effect of iteration count in the 8-epoch setting}
\label{sec:appendix-8-epoch}

In \Cref{fig:cifar-acc-vs-niter} the 8-epoch CIFAR-10 AirBench setting is reused to evaluate the effect of different iteration counts. For a given iteration count, the learning rate is swept along the momentum. Five runs are conducted per learning rate, and the momentum with the best average  (test time augmentation) validation accuracy across learning rates is selected for each iteration count; shaded bands show $\pm 1$ standard deviation over these five runs.
While low iteration budget exhibits a similar behavior observed by \citep{shulgin_beyond_2026}: a larger number of iterations improves performance with a shift in optimal lr and momentum, which can be interpreted as the result of a faster training.

\begin{figure}[ht]
    \centering
    \includegraphics[width=\columnwidth]{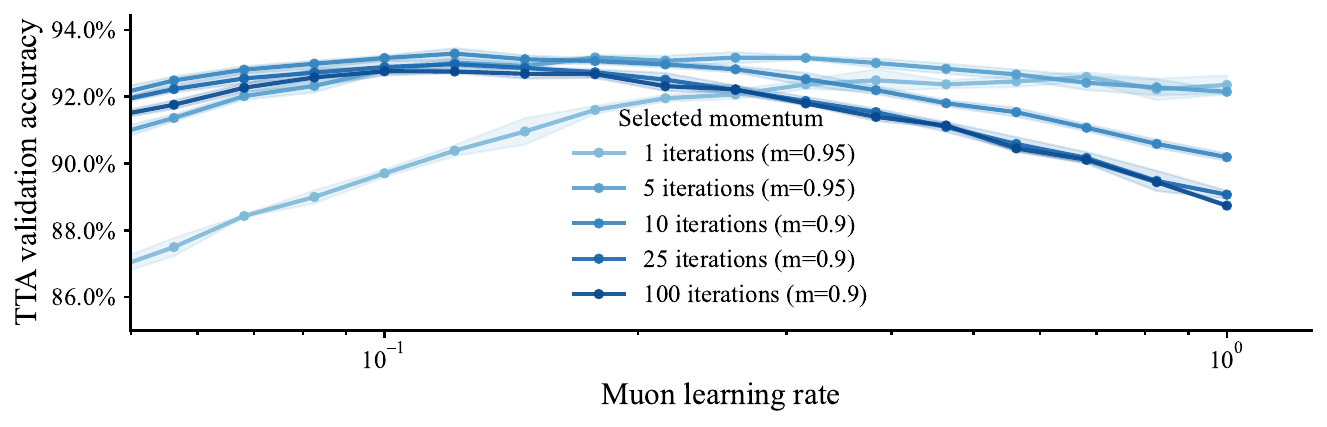}
    \caption{\textbf{Effect of Conv-NS iteration count on CIFAR-10 AirBench.} TTA validation accuracy over the Muon learning-rate sweep for different numbers of Newton--Schulz iterations. For each iteration count, momentum is selected by the best average TTA validation accuracy across learning rates; shaded bands show $\pm 1$ standard deviation over five runs.}
    \label{fig:cifar-acc-vs-niter}
\end{figure}

\section{One-epoch selection criteria}
\label{sec:appendix-one-epoch-criteria}

\paragraph{Robustness to the selection criterion.}
To verify that the non-monotonic effect of Conv-NS iteration count is not an artifact of the hyperparameter-selection metric, we reselect the best configurations using either TTA (test time augmentation) validation accuracy or training accuracy (\Cref{tab:1-epoch-selection-criteria}). In both cases, performance improves up to 10 iterations and degrades at 25 and 100 iterations, despite substantial shifts in the selected learning rate. Moreover, the same trend under training-accuracy selection shows that the degradation is already present at the optimization level rather than arising solely from generalization. These results indicate that the Conv-NS iteration count is not merely an approximation-accuracy parameter: pushing the update closer to exact convolutional orthogonality can itself hinder training.

\begin{table}[h]
    \centering
    \small
    \setlength{\tabcolsep}{5pt}
    \begin{tabular}{lccccc}
        \toprule
        \multicolumn{6}{l}{\textbf{Top by TTA validation accuracy}} \\
        Iterations (Conv-NS) & 1 & 5 & 10 & 25 & 100 \\
        \midrule
        TTA Val. Acc. (\%) & 85.36 & 86.66 & \textbf{87.75} & 86.61 & 85.90 \\
        Val. Acc. (\%) & 84.67 & 85.91 & 86.84 & 85.58 & 84.97 \\
        Train Acc. (\%) & 84.44 & 86.20 & 85.44 & 85.44 & 85.76 \\
        Muon LR & 0.433 & 0.432 & 0.381 & 0.181 & 0.143 \\
        \bottomrule
    \end{tabular}

    \vspace{0.75em}
    \begin{tabular}{lccccc}
        \toprule
        \multicolumn{6}{l}{\textbf{Top by training accuracy}} \\
        Iterations (Conv-NS) & 1 & 5 & 10 & 25 & 100 \\
        \midrule
        TTA Val. Acc. (\%) & 84.93 & 86.46 & 87.55 & 86.05 & 85.13 \\
        Val. Acc. (\%) & 84.22 & 85.71 & 86.69 & 85.29 & 83.94 \\
        Train Acc. (\%) & 86.32 & 87.32 & \textbf{88.12} & 87.00 & 86.32 \\
        Muon LR & 0.579 & 0.194 & 0.441 & 0.158 & 0.094 \\
        \bottomrule
    \end{tabular}
    \vspace{0.5em}
    \caption{\textbf{Alternate one-epoch CIFAR-10 selection criteria.} These blocks report the single top configuration per sweep under validation-oriented and train-accuracy-oriented selection criteria; each configuration line is already averaged over repeated runs with the same hyperparameters.}
    \label{tab:1-epoch-selection-criteria}
\end{table}

\section{Comparison with One Pass ImageNet}
\label{app:one-pass-imagenet}

One Pass ImageNet (OPIN; \citet{hu2021one}) studies ResNet-50 training from a
randomly ordered ImageNet stream in which each training example arrives once.
Its one-pass constraint permits retaining and revisiting examples in a limited
replay buffer. The proposed error-prioritized replay method preferentially
samples examples with larger predictive errors and applies importance weights
to correct the sampling bias. With $k$ replay minibatches per incoming minibatch,
the effective training budget is $E_{\mathrm{eff}}=k+1$ epochs. OPIN Table~2
reports results at 2, 4, 6, and 9 effective epochs, using buffers containing
1\%, 5\%, or 10\% of the dataset, alongside full-storage multi-epoch baselines.

We compare these results with our one-epoch ImageNet experiments, which use
no replay and therefore have $E_{\mathrm{eff}}=1$. To distinguish data-processing
cost from optimization steps, we report both budgets relative to a 90-epoch
run with batch size 128. For dataset size $N$ and global batch size $B$, our batch size of 512 gives approximately 1.11\% of the example-processing budget
and 0.278\% of the update budget. These quantities describe a single training
run and exclude hyperparameter search. The computation proxy does not account
for optimizer overhead and should not be interpreted as measured FLOPs or
wall-clock time.

\begin{table*}[b]
  \centering
  \scriptsize
  \setlength{\tabcolsep}{3pt}
  \resizebox{\textwidth}{!}{%
  \begin{tabular}{lrrrrrrrrrr}
    \toprule
    & & & & & & \multicolumn{5}{c}{Validation top-1 accuracy (\%)} \\
    \cmidrule(lr){7-11}
    Method & \shortstack{Effective\\epochs}
    & \shortstack{Computation\\proxy}
    & \shortstack{Relative\\updates}
    & \shortstack{Batch\\size}
    & \shortstack{Updates\\(approx.)}
    & \shortstack{No\\replay}
    & \shortstack{Replay\\1\%}
    & \shortstack{Replay\\5\%}
    & \shortstack{Replay\\10\%}
    & \shortstack{Multi-epoch\\100\%} \\
    \midrule
    OPIN & 2 & $2/90$ & $2/90$ & 128 & 20,018 & -- & 44.7 & 45.1 & 45.7 & 46.1 \\
    OPIN & 4 & $4/90$ & $4/90$ & 128 & 40,036 & -- & 55.5 & 57.1 & 57.2 & 59.0 \\
    OPIN & 6 & $6/90$ & $6/90$ & 128 & 60,055 & -- & 58.9 & 61.3 & 62.2 & 64.1 \\
    OPIN & 9 & $9/90$ & $9/90$ & 128 & 90,082 & -- & 59.3 & 63.2 & 65.0 & 68.2 \\
    \midrule
    Muon (reshape), ours & 1 & $1/90$ & $1/360$ & 512 & 2,502 & \textbf{54.05} & -- & -- & -- & -- \\
    Muon (Conv-NS), ours & 1 & $1/90$ & $1/360$ & 512 & 2,502 & 52.95 & -- & -- & -- & -- \\
    SGD, our recipe & 1 & $1/90$ & $1/360$ & 512 & 2,502 & 30.47 & -- & -- & -- & -- \\
    \bottomrule
  \end{tabular}%
  }
  \caption{Comparison of OPIN Table~2 with the one-epoch ImageNet results
  in our Table~1. All experiments use ResNet-50. Replay column percentages
  indicate buffer capacity relative to the dataset; the multi-epoch baseline
  stores the full dataset. Our runs require no replay storage. Both budget
  ratios use a 90-epoch, batch-128 reference. Update counts are estimates
  using $N=1{,}281{,}167$, rounded to the nearest integer; exact counts depend
  on final-batch handling. The OPIN counts interpret each reported replay
  step as a separate minibatch update.}
  \label{tab:one-pass-imagenet}
\end{table*}

Both Muon variants exceed all OPIN results at two effective epochs,
including the full-storage baseline (46.1\%), while processing half as many
training examples and taking approximately eight times fewer optimizer
updates, without a replay buffer. They remain below the four-effective-epoch
replay results (55.5--57.2\%). Thus, the comparison highlights strong early
learning under a smaller data-processing and update budget. Because the
training recipes differ, it is a cross-paper comparison rather than a
controlled optimizer ablation; the resource ratios alone do not establish
an equivalent wall-clock speedup.


\section{Community baseline comparisons}
\label{sec:appendix-community-baselines}

\begin{figure*}[t]
    \centering
    \includegraphics[width=0.96\textwidth]{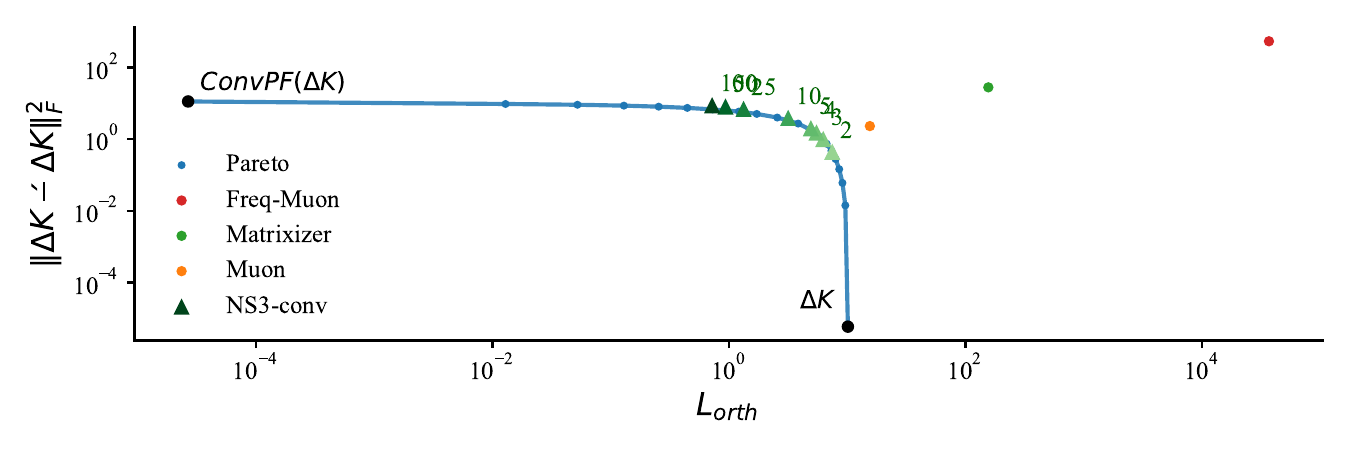}
    \caption{\textbf{Pareto trade-off on synthetic Gaussian kernels.} The grey curve is the empirical Pareto front computed via simulated annealing. Conv-NS iterates (blue dots, labelled by iteration count) trace the front from the rescaled SGD corner toward the convolutional polar factor as the number of iterations increases. Reshape-based Muon (RKO) lies far inside the suboptimal zone, reflecting a structural misalignment with the operator-level objective on random kernels. Matrixizer and Freq-Muon are far off-scale.}
    \label{fig:pareto-baselines}
\end{figure*}

We compare Conv-NS against two recent community-driven baselines -the only publicly available attempts at convolution-aware Muon to date -neither of which has been peer-reviewed or formally evaluated.
These comparisons are included for context, but remain appendix material because the main empirical claims center on Conv-NS, reshape-based Muon, and SGD.

\paragraph{Freq-Muon~\citep{kim2026frequency,cesista2026freqmuon}.}
Under circular boundary conditions, a convolution operator is block-circulant and is therefore diagonalized by the spatial DFT into $M^2$ independent channel-mixing matrices $\hat{K}[\omega] \in \mathbb{C}^{c_{\mathrm{out}} \times c_{\mathrm{in}}}$, one per frequency bin $\omega$.
Freq-Muon applies the standard Muon NS5 polar projection independently to each $\hat{K}[\omega]$ and recovers the spatial kernel via inverse FFT.
When the FFT grid matches the full input spatial dimensions ($M = i_h = i_w$), this procedure yields an exact orthogonalization of the circular convolution operator.
In practice, however, $M$ is set to a small fixed value (e.g.\ $M{=}8$) to keep costs tractable, breaking the exact operator correspondence and turning the method into a surrogate.
Additionally, the inverse FFT result is cropped to the original $k_h \times k_w$ support, discarding spatial taps beyond the kernel size.

\paragraph{Matrixizer~\citep{opooladz_matrixizer}.}
Instead of the standard $c_{\mathrm{out}} \times (c_{\mathrm{in}} k_h k_w)$ flattening used in reshape-based Muon, Matrixizer searches over all permutations of the gradient tensor's dimensions and all possible split points to find the 2D factorization that minimizes $m^{2}+n^{2}$, i.e.\ the total preconditioner cost (following the PSGD logic of~\citet{opooladz_matrixizer}).
The standard Muon NS5 iteration is then applied to this optimally shaped matrix.
While the resulting reshape is typically better balanced than the default flattening, the method remains fundamentally reshape-based: it does not account for stride, groups, dilation, or the structured embedding $\Delta K \mapsto \mathcal{T}(\Delta K)$ that governs how the convolution acts on data.

\subsection{Synthetic analysis of the orthogonality--fidelity trade-off}
\label{subsec:pareto-analysis}

\paragraph{Computing the empirical Pareto front.}
To evaluate how different orthogonalization strategies navigate the inherent trade-off between operator-level orthogonality ($\mathcal{L}_{\mathrm{orth}}$) and fidelity to the original kernel ($\mathcal{L}_{\mathrm{frob}} = \|\Delta K' - \Delta K\|_F^2$), we first construct a ground-truth Pareto front on synthetic data.
We initialize a random Gaussian convolution kernel with $c_{\mathrm{in}} = 16$, $c_{\mathrm{out}} = 16$, and $k_h=k_w=3$.
Because the scalarized objective $\lambda \mathcal{L}_{\mathrm{orth}} + (1 - \lambda) \mathcal{L}_{\mathrm{frob}}$ offers no convexity guarantees, we rely on a simulated annealing approach to explore the landscape.
We sample 50 scalar weightings $\lambda$ and optimize each configuration using Adam with simulated annealing noise injection over 1500 iterations.
The best result is kept across multiple trials to avoid local minima, yielding an empirical Pareto front that serves as a rigorous baseline.

\paragraph{Evaluating baselines on the synthetic landscape.}
Against this optimal trade-off curve, we overlay the operating points of Convolutional Newton--Schulz (Conv-NS) at iteration counts $\{2, 3, 4, 5, 10, 25, 50, 100\}$, reshape-based Muon (RKO), Matrixizer~\citep{opooladz_matrixizer}, and Freq-Muon~\citep{cesista2026freqmuon}.
The results are reported in \cref{fig:pareto-baselines}.

\paragraph{Conv-NS traces the Pareto front.}
As the iteration count grows from 2 to 100, the Conv-NS operating points slide along the simulated-annealing Pareto front, progressing from the low-modification region (high $\mathcal{L}_{\mathrm{orth}}$, low $\mathcal{L}_{\mathrm{frob}}$) toward the exact convolutional polar factor (low $\mathcal{L}_{\mathrm{orth}}$, high $\mathcal{L}_{\mathrm{frob}}$).
This confirms that the projected iteration efficiently navigates the support-constrained landscape and that the iteration count serves as a user-controllable dial for the orthogonality--fidelity trade-off.

\paragraph{Muon (RKO) is misaligned with the operator-level objective.}
Unlike its behavior on real training gradients (\cref{fig:grad-pareto-real}), reshape-based Muon sits far above the Pareto front on synthetic kernels. It modifies the kernel ($\mathcal{L}_{\mathrm{frob}} \approx 2.3$) without improving operator-level orthogonality ($\mathcal{L}_{\mathrm{orth}} \approx 15.6$, compared to ${\sim}10.2$ for the unmodified kernel).
Rather than a systematic degradation, this reflects a structural \emph{misalignment}: on unstructured random Gaussian data, the reshape surrogate pushes the kernel in a direction that does not coincide with the true operator-level objective.
A contributing factor is operator-norm inflation---orthogonalizing the flattened matrix can increase the spectral norm of the induced convolution---though the magnitude of this effect varies across initializations.
Importantly, as shown in \cref{subsec:why-reshape-works}, the structure of real training gradients corrects this misalignment, allowing RKO to fall surprisingly close to the Pareto front in practice.

\paragraph{Matrixizer and Freq-Muon are far from optimal.}
Matrixizer ($\mathcal{L}_{\mathrm{orth}} \approx 157$) and Freq-Muon ($\mathcal{L}_{\mathrm{orth}} \approx 3.7 \times 10^4$) lie orders of magnitude above the Pareto front, indicating that the approximations underlying these methods---Kronecker factorization and partial-spectrum FFT orthogonalization, respectively---introduce substantial artifacts on this objective.

\paragraph{Practical take-away.}
Even a modest iteration budget (e.g.\ 5 iterations) already places Conv-NS near the Pareto front at a useful orthogonality level ($\mathcal{L}_{\mathrm{orth}} \approx 4.9$), offering a practical compromise between compute cost and operator-level fidelity.
In contrast, no single-shot baseline achieves a comparable position on the front.

\subsection{CIFAR-10 community baseline results}
\label{subsec:cifar10-community-baselines}

All Muon-family methods: Conv-NS, reshape-based Muon, Matrixizer, and Freq-Muon, share the same architecture, training schedule, and augmentation pipeline; only the orthogonalization routine inside the optimizer differs.
Muon - Conv-NS and Muon - reshape are each reported over 25 independent runs (results slightly differs from \cref{tab:cifar10-imagenet-results} which report statistics over 10 runs); Matrixizer, Freq-Muon, and SGD are preliminary single-run results that will be extended before the final version, results are reported form their repositories in \cref{tab:cifar10-community-baselines}.

\begin{table}[t]
    \centering
    \small
    \begin{tabular}{lccc}
        \hline
        Method & Val. acc. (\%) & TTA acc. (\%) & Runs \\
        \hline
        Conv-NS (ours) & $93.17 \pm 0.12$ & $93.98 \pm 0.09$ & 25 \\
        Muon (reshape) & $93.10 \pm 0.14$ & $93.99 \pm 0.14$ & 25 \\
        Matrixizer & $92.96$ & $93.73$ & 1 \\
        Freq-Muon & $92.55$ & $93.06$ & 1 \\
        SGD & $90.47$ & $91.11$ & 1 \\
        \hline
    \end{tabular}
    \caption{\textbf{CIFAR-10 AirBench94 community baseline comparison.} Mean $\pm$ standard deviation over 25 runs for Conv-NS and Muon~(reshape); single-run baselines are preliminary and shown as context.}
    \label{tab:cifar10-community-baselines}
\end{table}

Conv-NS and reshape-based Muon achieve near-identical CIFAR-10 AirBench94 performance and both outperform the single-run SGD, Freq-Muon, and Matrixizer baselines.
Because Matrixizer and Freq-Muon are preliminary community implementations without peer-reviewed standardized evaluations, we use these results as diagnostic context rather than as a central comparison in the main paper.

\section{LMO dense matrix proof}
\label{app:proofLMO}
The objective of Muon is to find for a given layer of weight $W$ the optimal weight update that maximise the loss decrease under a constraint of the maximal variation of the output of the layer. Mathematically, the formulation of the loss decrease is based on the first-order Taylor approximation.
\begin{equation}
\mathcal{L}(W+\Delta_W)
=
\mathcal{L}(W)
+
\left\langle G,\Delta_W\right\rangle_F
+
o\!\left(\|\Delta_W\|\right).
\label{eq:taylordense}
\end{equation}
While the constraint on the maximum variation of the layer output is given by the operator norm:
\begin{equation}
\sup_{\|x\|_2=1}
\left\|Wx-(W+\Delta_W)x\right\|_2 = \left\|\Delta_W\right\|_{\mathrm{op}}
\end{equation}
This lead to a LMO formulation in the space of the dense matrices:
\begin{equation*}
\Delta_W^\star = \underset{\|\Delta_W\|_{\mathrm{op}} \le \eta}{\arg\min} \langle G, \Delta_W \rangle_F
\end{equation*}
The solution of this LMO is proportional to the the polar factor of the gradient defined by 

\begin{equation*}
\operatorname{PF}(G) = \underset{Q}{\operatorname{argmin}} \quad \mathcal{L}_{\mathrm{frob}}(G,Q) \quad \text{subject to} \quad \| Q Q^\top - I \|_F^2 = 0
\end{equation*}
or with $m\leq n$, using the definition of the Stiefel Manifold  $St(m,n)=\left\{Q\in \mathbb{R}^{m\times n}: QQ^T=Id\right\}$, noted $\stief$.

\begin{equation*}
\operatorname{PF}(G) = \underset{Q\in \stief}{\operatorname{argmin}} \quad \mathcal{L}_{\mathrm{frob}}(G,Q) \quad
\end{equation*}

\begin{proof}
    First, $\forall Q\in \stief$, the Frobenius norm is constant and equal to m: $\|Q\|_F^2=tr(QQ^T)=m$.

    Therefore,
    $$
    \|Q-G\|_F^2 = \|Q\|_F^2+\|G\|_F^2 - 2\langle G, Q \rangle_F = m+\|G\|_F^2 - 2\langle G, Q \rangle_F 
    $$
    Thus $PF(G) \in \underset{Q\in \stief}{\operatorname{argmax}} \quad \langle G, Q \rangle_F \quad$, or equivalently $-\eta PF(G) \in \underset{Q\in \eta \stief}{\operatorname{argmin}} \quad \langle G, Q \rangle_F \quad$.

    But in the previous equation, the minimization is over $\eta \stief$, and not the set $\left\{\Delta\in\mathbb{R}^{m\times n}: \|\Delta\|_{\mathrm{op}}\leq \eta\right\}$.

    Since, Linear minimization does not distinguish a set from its convex hull, noted $\operatorname{CH}$, so:
    $$
    -\eta PF(G) \in \underset{Q\in \eta \operatorname{CH}(St(m,n))}{\operatorname{argmin}} \quad \langle G, Q \rangle_F \quad
    $$.

    An important property of the Stiefel Manifold for dense matrices is that: 
    $$
    \operatorname{CH}( St(m,n))=\left\{\Delta\in\mathbb{R}^{m\times n}: \|\Delta\|_{\mathrm{op}}\leq 1\right\}
    $$
    The proof is given in~\citet{journee2010generalized}

Thus
$$
    -\eta PF(G) \in \underset{\|Q\|_{\mathrm{op}}\leq \eta}{\operatorname{argmin}} \quad \langle G, Q \rangle_F \quad
$$.
\end{proof}

\section{convolutional LMO proof}
\label{app:proofconvLMO}
This section presents the convolutional LMO construction. Given a convolutional layer with a kernel $K$, the first-order Taylor approximation is similar to the dense formulatin (\Cref{eq:taylordense}):

\begin{equation}
\mathcal{L}(K+\Delta_K)
=
\mathcal{L}(K)
+
\left\langle G_K,\Delta_K\right\rangle_F
+
o\!\left(\|\Delta_K\|\right).
\label{eq:taylorconv}
\end{equation}

But the constraint on the minimum variation has to be evaluated in the space of the structured linear operator $\mathcal{T}(K)$:

\begin{equation}
\sup_{\|x\|_2=1}
\left\|\mathcal{T}(K)x-\mathcal{T}(K+\Delta_K)x\right\|_2 = \left\|\mathcal{T}(\Delta_K)\right\|_{\mathrm{op}}
\end{equation}
The convolutional LMO formulation is thus:

\begin{equation*}
\Delta_K^\star = \underset{
\substack{
  \Delta_K\in \mathcal{M}_{\mathrm{kernel}}\\
  \|\mathcal{T}(\Delta_K)\|_{\mathrm{op}} \le \eta}}{\arg\min} \langle G_K, \Delta_K \rangle_F
\end{equation*}
Furthermore, under circular padding and fixed spatial resolution, the Frobenius distance between two convolutional operators is proportional to the Frobenius distance between their underlying kernels:
\begin{equation}
\left\langle \mathcal{T}(G_K), \mathcal{T}(\Delta_K) \right\rangle_F
\propto \left\langle G_K, \Delta_K \right\rangle_F.
\end{equation}
This lead to the convolutional LMO formulation provided in \Cref{eq:conv-lmo}.

The link with the convolutional polar factor  $\convPF(G_K)$ (\Cref{eq:conv_op_lmo}) follows the same proof as for dense matrices:

\begin{proof}
    First, $\forall \Delta_K\in \mathcal{M}_{\mathrm{kernel}}$ such that $\mathcal{T}(\Delta_K)\in \stief$, the Frobenius norm of $\mathcal{T}(\Delta_K)$ is constant.

    Therefore,
    $$
    \|\mathcal{T}(\Delta_K)-\mathcal{T}(G_K)\|_F^2 = \|\mathcal{T}(\Delta_K)\|_F^2+\|\mathcal{T}(G_K)\|_F^2 - 2\langle \mathcal{T}(G_K), \mathcal{T}(\Delta_K) \rangle_F = cste - 2\langle \mathcal{T}(G_K), \mathcal{T}(\Delta_K) \rangle_F 
    $$
    Thus $\convPF(G_K) \in \underset{
  \substack{
  \Delta_K\in \mathcal{M}_{\mathrm{kernel}}\\
  \mathcal{T}(\Delta_K)\in \stief}
  }{\operatorname{argmax}} \quad \langle \mathcal{T}(G_K), \mathcal{T}(\Delta_K) \rangle_F \quad$, 
  
  or equivalently $-\eta \convPF(G) \in \underset{
  \substack{
  \Delta_K\in \mathcal{M}_{\mathrm{kernel}}\\
  \mathcal{T}(\Delta_K)\in \eta\stief}
  }{\operatorname{argmin}} \quad \langle \mathcal{T}(G_K), \mathcal{T}(\Delta_K) \rangle_F \quad$.
\end{proof}

Unfortunately the Convex Hull argument is no more an equality in the convolution setup since:
\[
\operatorname{CH}(\mathcal{T}(\mathcal{M}_{\mathrm{kernel}}) \cap \stief) \subseteq \mathcal{T}(\mathcal{M}_{\mathrm{kernel}}) \cap \operatorname{CH}(\stief)=\left\{ \Delta \in \mathcal{M}_{\mathrm{kernel}} \text{ such that } \|\mathcal{T}(\Delta)\|_{op}\leq 1\right\}
\]

The convolutional polar factor  $\convPF$ is thus the optimum on a subspace of the unit ball, inducing an overconstraint.

\end{document}